\documentclass{article}

\usepackage{microtype}
\usepackage{graphicx}
\usepackage{subcaption}
\usepackage{booktabs} 

\usepackage{hyperref}

\usepackage[preprint]{icml2026}

\usepackage{amsmath}
\usepackage{amssymb}
\usepackage{mathtools}
\usepackage{amsthm}
\usepackage{algorithm}
\usepackage{algorithmic}

\newcommand{\ignore}[1]{} 

\usepackage[capitalize,noabbrev]{cleveref}

\theoremstyle{plain}
\newtheorem{theorem}{Theorem}[section]

\theoremstyle{definition}

\theoremstyle{remark}

\usepackage[textsize=tiny]{todonotes}

\usepackage{xspace}
\usepackage{multirow}
\usepackage{tcolorbox}
\newcommand{\ie}{\emph{i.e.,}\xspace}
\newcommand{\vs}{\emph{vs.}\xspace}

\icmltitlerunning{Beyond Static Pipelines: Learning Dynamic Workflows for Text-to-SQL}
\begin{document}

\twocolumn[
  \icmltitle{Beyond Static Pipelines: Learning Dynamic Workflows for Text-to-SQL}



  \icmlsetsymbol{equal}{*}

  \begin{icmlauthorlist}
    \icmlauthor{Yihan Wang}{ruc}
    \icmlauthor{Peiyu Liu}{uibe}
    \icmlauthor{Runyu Chen}{uibe}
    \icmlauthor{Wei Xu}{ruc}
  \end{icmlauthorlist}

  \icmlaffiliation{ruc}{Renmin University of China}
  \icmlaffiliation{uibe}{University of International Business and Economics}

  \icmlcorrespondingauthor{Peiyu Liu}{liupeiyustu@163.com}
  \icmlkeywords{Machine Learning, ICML}

  \vskip 0.3in
]



\printAffiliationsAndNotice{}  

\begin{abstract}
Text-to-SQL has recently achieved impressive progress, yet remains difficult to apply effectively in real-world scenarios. This gap stems from the reliance on single static workflows, fundamentally limiting scalability to out-of-distribution and long-tail scenarios. Instead of requiring users to select suitable methods through extensive experimentation, we attempt to enable systems to adaptively construct workflows at inference time.
Through theoretical and empirical analysis, we demonstrate that optimal dynamic policies consistently outperform the best static workflow, with performance gains fundamentally driven by heterogeneity across candidate workflows. Motivated by this, we propose SquRL, a reinforcement learning framework that enhances LLMs' reasoning capability in adaptive workflow construction. We design a rule-based reward function and introduce two effective training mechanisms: \emph{dynamic actor masking} to encourage broader exploration, and \emph{pseudo rewards} to improve training efficiency. Experiments on widely-used Text-to-SQL benchmarks demonstrate that dynamic workflow construction consistently outperforms the best static workflow methods, with especially pronounced gains on complex and out-of-distribution queries. The codes are available at~\href{https://github.com/Satissss/SquRL}{https://github.com/Satissss/SquRL}. 
\end{abstract}

\section{Introduction}

Text-to-SQL aims to translate natural language questions into executable SQL queries, enabling non-expert users to interact with relational databases without requiring specialized knowledge~\cite{shi2025survey, hong2025next}. Recent advances in large language models (LLMs), particularly in reasoning and code generation~\cite{jiang2024survey}, have driven the development of increasingly sophisticated agentic systems, pushing performance toward human-level accuracy on many benchmarks~\cite{li2023can, yu2018spider}.

Despite these advances, applying Text-to-SQL in real-world scenarios remains challenging, as most existing methods assumes that one static fixed workflow can generalize across heterogeneous query distributions, fundamentally constraining scalability to out-of-distribution and long-tail scenarios~\cite{hong2025next}. For simple queries, complex pipelines introduce unnecessary latency and computational overhead~\cite{zhu2025elliesql}. For difficult queries involving multiple challenges, no single method consistently dominates, requiring the integration of complementary reasoning strategies. Although modular systems like Squrve~\cite{wang2025squrve} enable flexible workflow construction, they still rely on human-design rather than selecting the suitable workflow adaptively. 

Motivated by this, we propose SquRL, a reinforcement learning~(RL) framework that enhances LLM's reasoning capabilities in dynamically workflow construction. 
Instead of committing to single method performance, the SquRL aims to adaptively select effective workflow based on query and database context, thereby enabling stronger generalization to diverse real-world query scenarios. In the following sections, we focus on addressing two critical questions:

\textbf{Q1:} \textit{Does there exist a non-trivial performance gap between the optimal static workflow and the optimal dynamic workflow under realistic query distributions?} 

Through theoretical analysis, we demonstrate that the optimal dynamic workflow always performs no worse than the best static workflow, with the performance gap determined by the heterogeneity between candidate workflows. We further provide empirical validation, confirming the potential of dynamic policy in improving both efficiency and accuracy.

\textbf{Q2:} \textit{Can we learn a robust policy for dynamically selecting the most appropriate workflow to process Text-to-SQL queries effectively?} 

RL-based training relies on high-quality training data, which is scarce for real-world Text-to-SQL scenarios. As the action space grows explosively with the number of components, learning a generalized policy from limited and sparse execution feedback becomes challenging. Moreover, the delayed rewards introduced by complete workflow execution make it critical to accelerate training without sacrificing accuracy.

In this study, we first introduce a realization pathway for dynamic workflow construction, which can be implemented from existing static methods. To address sparse execution feedback, we design a rule-based reward function and introduce two effective training mechanisms: dynamic actor masking to encourage broader exploration, and pseudo rewards to improve training efficiency. Extensive experiments on multiple benchmarks demonstrate that SquRL consistently outperforms existing state-of-the-art static baselines, with pronounced gains on complex and out-of-distribution queries. These results demonstrate that dynamic policy learning offers a scalable potential that transcends the inherent limitations of manual workflow design for Text-to-SQL.
\section{Preliminary}

\subsection{Text-to-SQL }
The Text-to-SQL task aims to translate a natural language query into an executable SQL statement. Formally, given a natural language question $q$, a database schema $S$, and optional external knowledge
$K$~\cite{lei2024spider}, the objective is to generate a SQL query:

\begin{equation} SQL = F(q, S, K \mid M), \end{equation}

where $M$ denotes a large language model and $F$ represents the overall reasoning and generation process. 
Under this setting, existing methods consistently construct multi-component integrated workflows (e.g., schema linking~\cite{wang2025linkalign}, query decomposition~\cite{pourreza2024dts}, and SQL refinement~\cite{li2025pet}) to handle specific real-world challenges. 
However, these methods often degrade on edge cases even within their intended domains, limiting practical generalization across diverse real-world queries.

In this study, we propose an alternative task-oriented paradigm that aims to dynamically construct the most suitable workflow based on task context, leading to adaptive Text-to-SQL systems.

\subsection{Actors, Templates, and Workflows}
To formalize dynamic workflow construction, we introduce three core abstractions: Actor, Template, and Workflow.

\paragraph{Actor.} An actor is a modular component that implements a specific function within the Text-to-SQL pipeline, such as schema linking, SQL generation, or execution-guided refinement. 
Following the unified interface proposed in \emph{Squrve}~\cite{wang2025squrve}, a recent modular Text-to-SQL framework, heterogeneous components from prior methods are abstracted into a small set of high-level actor categories.
\footnote{Here we interpret actors as \emph{functional tools} in a broad sense: reusable modules that execute specific functions with well-defined input--output interfaces.}
For example, \texttt{parser} denotes a schema linking component, while \texttt{optimizer} denotes a SQL refinement component.

\paragraph{Template.} A template is an abstract workflow skeleton that specifies a sequence or structure of functional roles (\ie actor types) without binding them to specific implementations. 
For example, the DIN-SQL~\cite{pourreza2023din} workflow consists of schema linking, sub-question decomposition, SQL generation, and SQL refinement, which can be abstracted as the template: [\texttt{parser},  \texttt{decomposer}, \texttt{generator}, \texttt{optimizer}]. 
In fact, the templates capture the high-level reasoning logic of a workflow and can be instantiated with different actor implementations to realize diverse execution behaviors.

\paragraph{Workflow.} A workflow is a concrete instantiation of a template by assigning specific Actor implementations to each role. Continuing the above example, the DIN-SQL workflow can be instantiated as: [\texttt{dinsql-parser}, \texttt{dinsql-decomposer}, \texttt{dinsql-generator}, \texttt{dinsql-optimizer}], which can be executed by the Squrve framework to produce a SQL query.

\subsection{Dynamic Workflow Construction}
Rather than committing to a single static Text-to-SQL pipeline, we view dynamic Text-to-SQL as a \emph{tool-based reasoning problem}, where solving a query amounts to dynamically constructing a task-specific tool workflow that maximizes expected execution accuracy under the underlying query distribution.
Formally, let a query $q$ be drawn from an unknown distribution $D$ over the query space $\mathcal{Q}$.
For any executable workflow $W$, we define a Bernoulli random variable:
\begin{equation}
    Y_W(q) = \mathbb{I}\{W(q, S, K, M) \text{ executes correctly}\},
\end{equation}
which captures whether workflow $W$ successfully solves query $q$.
Based on this definition, we consider a combinatorial workflow space:
\begin{equation}
\Omega = \{\, f_{\text{match}}(T, A) \mid T \in \mathcal{L}_{\text{template}},\ A \subseteq \mathcal{L}_{\text{actor}} \,\},
\end{equation}
where each workflow is instantiated by assigning actor implementations $A$ to a template $T$.
Accordingly, the expected execution accuracy of a workflow $W$ under $D$ is defined as
\begin{equation}
\mathcal{E}(W) = \mathbb{E}_{q \sim D}[Y_W(q)].
\end{equation}
With the above formulation, an oracle workflow selector can be defined as
\begin{equation}
W^* = \arg\max_{W \in \Omega} \ \mathcal{E}(W),
\end{equation}
which represents the optimal workflow that maximizes expected execution accuracy over the true
query distribution.
However, directly computing $W^*$ is infeasible in practice.
This is because the workflow space $\Omega$ is combinatorial in nature, and oracle supervision
over the true query distribution is unavailable.
Therefore, instead of explicitly solving for $W^*$, we aim to learn a policy
$F_{\text{dynamic}}$ that selects workflows conditioned on $(q, S, K)$ from sparse, execution-based
feedback, with the goal of approximating the oracle selector.
\begin{equation}
    F_{\text{dynamic}} = f(W^* \mid Q, S, K, M),
\end{equation}
Next, we first analyze the potential performance gains enabled by dynamically constructing tool workflows, and then present a practical approach for approximating the optimal tool-based reasoning policy.

\section{Static \vs Dynamic: A Theoretical and Empirical Analysis}\label{sec: upper_bound}

\paragraph{Theoretical Analysis.}
First, we characterize the fundamental advantage of dynamic workflow construction over any static workflow from a theoretical perspective.
Specifically, we establish two key results: 
(i) the optimal dynamic policy attains execution accuracy no worse than the best static workflow; and (ii) the resulting performance gap is governed by the degree of workflow heterogeneity.

To formalize this comparison, 
let $\Omega = \{W_1, \ldots, W_K\}$ to denote a finite set of workflows and let $q \sim \mathcal{D}$ be a query drawn from the underlying distribution.
The expected execution accuracy of the optimal static workflow is: 
\begin{equation}
    \text{EX}_{\text{static}} = \max_{W \in \Omega} \mathbb{E}_{q \sim \mathcal{D}}[Y_W(q)],
\end{equation}
While an oracle dynamic selector that chooses the best workflow for each query achieves
\begin{equation}
    \text{EX}_{\text{dynamic}} = \mathbb{E}_{q \sim \mathcal{D}}[\max_{W \in \Omega} Y_W(q)].
\end{equation}
We define the performance gap as $\Delta = \text{EX}_{\text{dynamic}} -\text{EX}_{\text{static}}$.
\begin{theorem}
\label{thm:static-dynamic-gap}
For any finite workflow set $\Omega$ and query distribution $D$, we have $\text{EX}_{\text{dynamic}} \ge \text{EX}_{\text{static}}$. Moreover, $\Delta = 0$ if and only if there exists a workflow $W^* \in \Omega$ whose success region covers the union of all workflows' success regions almost surely under $\mathcal{D}$.
\end{theorem}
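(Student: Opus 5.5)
The inequality comes from a pointwise bound. For every query $q$ and every fixed $W\in\Omega$ we have $Y_W(q)\le \max_{W'\in\Omega}Y_{W'}(q)$. Taking expectations under $\mathcal{D}$ gives $\mathbb{E}[Y_W(q)]\le \text{EX}_{\text{dynamic}}$ for each $W$. Maximizing the left-hand side over the finite set $\Omega$ then gives $\text{EX}_{\text{static}}\le \text{EX}_{\text{dynamic}}$, i.e.\ $\Delta\ge 0$. Finiteness of $\Omega$ guarantees that both maxima are attained and that $\max_W Y_W$ is a measurable $\{0,1\}$-valued function, being a finite maximum of indicators. I will define the success region $R_W=\{q: Y_W(q)=1\}$ and the union $U=\bigcup_{W\in\Omega}R_W$. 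Then $\max_W Y_W=\mathbb{I}_U$, so $\text{EX}_{\text{dynamic}}=\mathcal{D}(U)$ and $\mathcal{E}(W)=\mathcal{D}(R_W)$.

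For the ``if'' direction of the characterization, suppose some $W^*$ satisfies $U\subseteq R_{W^*}$ up to a $\mathcal{D}$-null set, that is, $\mathcal{D}(U\setminus R_{W^*})=0$. Since $R_{W^*}\subseteq U$ always holds, this gives $\mathcal{D}(R_{W^*})=\mathcal{D}(U)$. Hence $\text{EX}_{\text{static}}\ge \mathcal{E}(W^*)=\text{EX}_{\text{dynamic}}$, and together with the first part $\Delta=0$.

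For the ``only if'' direction, assume $\Delta=0$ and let $W^*$ be a maximizer of $\mathcal{E}$ over $\Omega$, which exists because $\Omega$ is finite. Then $\mathcal{D}(R_{W^*})=\mathcal{D}(U)$. Because $R_{W^*}\subseteq U$, additivity of measure gives $\mathcal{D}(U\setminus R_{W^*})=\mathcal{D}(U)-\mathcal{D}(R_{W^*})=0$. So $R_{W^*}$ covers $U$ almost surely.

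This step is the only delicate point, and the subtlety is just noting that it uses two facts. The first is that the static maximum is actually attained, which is where finiteness is needed. The second is that the subtraction is legitimate, which holds since all measures are at most $1$. An equivalent route is to write $\Delta=\mathbb{E}[\mathbb{I}_U-Y_{W^*}]$ with a nonnegative integrand, and observe that a nonnegative random variable with zero mean vanishes almost surely.

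Finally, I will remark on what the identity says about magnitude. The same computation gives $\Delta=\min_{W}\mathcal{D}(U\setminus R_W)$, the mass of queries solved by some workflow but missed by the best single one. This makes precise the claim that the gap is governed by heterogeneity across workflows.
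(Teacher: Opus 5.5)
Your proof is correct and follows essentially the paper's argument: the pointwise bound $Y_W(q)\le\max_{W'}Y_{W'}(q)$ gives $\Delta\ge 0$, and the identity $\Delta=\mathcal{D}(U)-\mathcal{D}(R_{W^*})=\mathcal{D}(U\setminus R_{W^*})$ for a maximizer $W^*$ gives the characterization. Your separate treatment of the ``if'' direction for an arbitrary covering workflow, not necessarily the maximizer, is slightly more careful than the paper, which argues both directions only through the maximizer $i^*$.
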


\ignore{
The proof of this theorem is provided in Appendix~\ref{app: theory}. To further quantify the margins of this performance gap, we introduce two disagreement metrics, \emph{effectiveness disagreement} and \emph{efficiency disagreement}. The effectiveness disagreement between workflows $W_i$ and $W_j$ denotes the probability of producing different correctness outcomes:

\begin{equation}
\label{equ:d_sample}
    D_{\text{sample}}(i, j) = \Pr_{q \sim \mathcal{D}} (Y_i(q) \neq Y_j(q))
\end{equation}
Moreover, to capture efficiency differences between two workflows, we introduce the efficiency disagreement:
\begin{equation}
\label{equ:d_efficiency}
    D_{\text{efficiency}}(i, j) = \mathbb{E}_{q \sim \mathcal{D}} \left[ \frac{|t_i(q) - t_j(q)|}{t_i(q) + t_j(q)} \right]
\end{equation}

Where $t_i$ denotes the expected runtime of workflow $W_i$ on query $q$. Therefore, we can define the overall distance between two workflows as:
\begin{equation}
\label{equ:w_d}
    D(W_i, W_j) = \frac{1}{2} \left( D_{\text{sample}}(i, j) + D_{\text{efficiency}}(i, j) \right)
\end{equation}

\begin{theorem}
\label{thm:gap-characterization}
Let $W^* = \arg\max_{W \in \Omega} \mathbb{E}_{q \sim \mathcal{D}}[Y_W(q)]$ be a best static workflow. Then the performance gap satisfies
\begin{equation}
    \Delta  \geq \max_{j \neq *} \frac{D_{\text{sample}}(*, j) + \mathcal{E}(W_j) - \mathcal{E}(W^*)}{2}.
\end{equation}
In particular, if there exists a workflow $W_j$ such that $D_{\text{sample}}(*, j) > 0$ and $W_j$ succeeds on queries where $W^*$ fails with non-zero probability, then $\Delta > 0$.
\end{theorem}

Appendix~\ref{app: theory} proves that $\Delta$ grows monotonically with workflow disagreement, since disjoint success regions aggregate additively in $\text{EX}_{\text{dynamic}}$, while any static workflow captures only one.
}

The analysis above reveals a fundamental limitation: when workflows exhibit complementary success regions, no static pipeline can match the expected accuracy achievable by dynamic workflow construction. We next empirically verify that the key theoretical conditions underlying this advantage indeed hold in practice.

\paragraph{Empirical Validation.}
To validate Theorem~\ref{thm:static-dynamic-gap} and characterize the upper bound of dynamic workflow selection, we conduct an oracle-style evaluation on the SynSQL dataset~\cite{li2025omnisql}. Specifically, for each query, we exhaustively evaluate all candidate workflows and record the fastest workflow that produces a correct execution. Aggregating these per-query optimal choices yields the oracle accuracy and runtime of dynamic selection, which serves as an upper bound baseline. Detailed experimental settings are provided in Appendix~\ref{app:exp_val}. Based on this setup, we make the following empirical observations.

\begin{tcolorbox}[colback=blue!5!white,colframe=blue!55!black,width=.5\textwidth,title={Main finding for Q1}]{
\textit{The key bottleneck of prior Text-to-SQL systems is the fixed-workflow assumption rather than model capacity, and reasoning over workflows yields a strictly higher accuracy–efficiency upper bound.}
}
\end{tcolorbox}
This conclusion is supported by the following empirical observations:
\textbf{\emph{Observation One}: Upper bound advantage of dynamic selection.} 
Figure~\ref{fig:pareto} highlights an oracle upper bound of dynamic workflow selection (red star), obtained by selecting, for each query, the fastest workflow that produces a correct execution. This per-sample optimal selection yields an average execution accuracy of \textbf{81.5\%} while substantially reducing runtime, outperforming any single static workflow by a large margin. This result demonstrates that the advantage of dynamic selection does not come from a particular workflow, but from the ability to adaptively choose different workflows across queries, thereby achieving a strictly higher accuracy–efficiency upper bound than any fixed strategy.
\textbf{\emph{Observation Two}: Strong workflow heterogeneity.} 
We also find substantial pairwise differences among workflows, indicating that they exhibit heterogeneous behaviors across queries. This observation provides empirical support for the heterogeneity assumption underlying Theorem~\ref{thm:static-dynamic-gap}. Additional analysis details are provided in the Appendix~\ref{app:dis_estimate}.


\begin{figure}[t]
    \centering
    \includegraphics[width=0.9\linewidth]{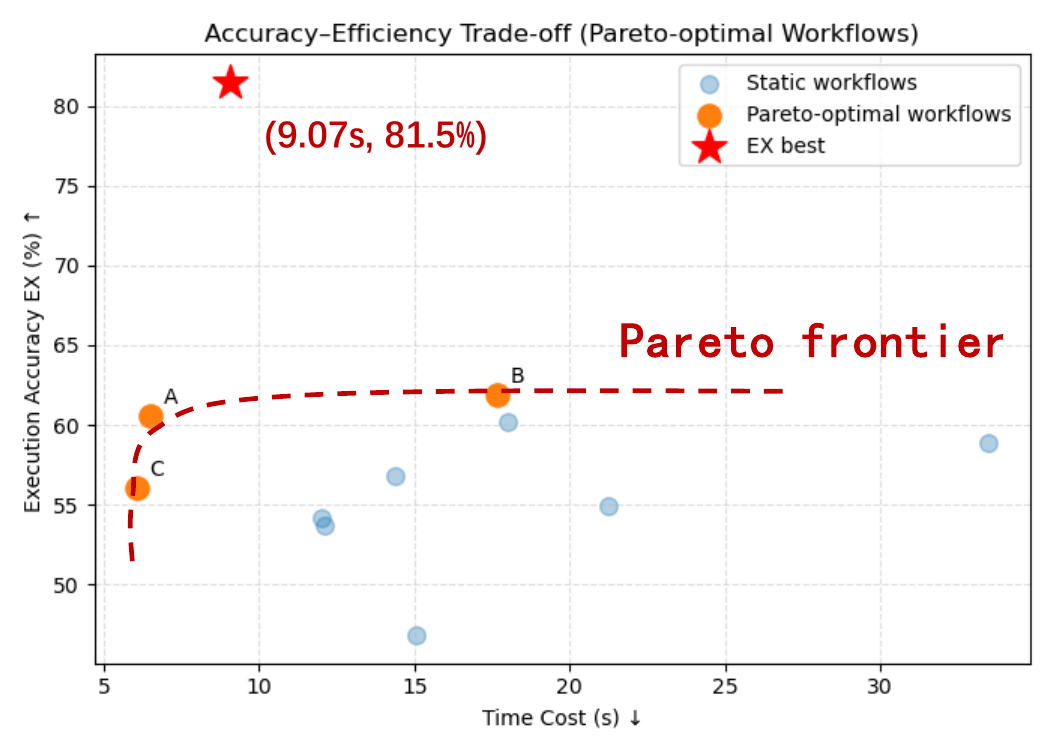}
    \caption{Oracle dynamic workflow (red star) \vs static workflows.}
    \label{fig:pareto}
\end{figure}

\begin{figure*}[t]
\centering
\includegraphics[width=0.9\textwidth]{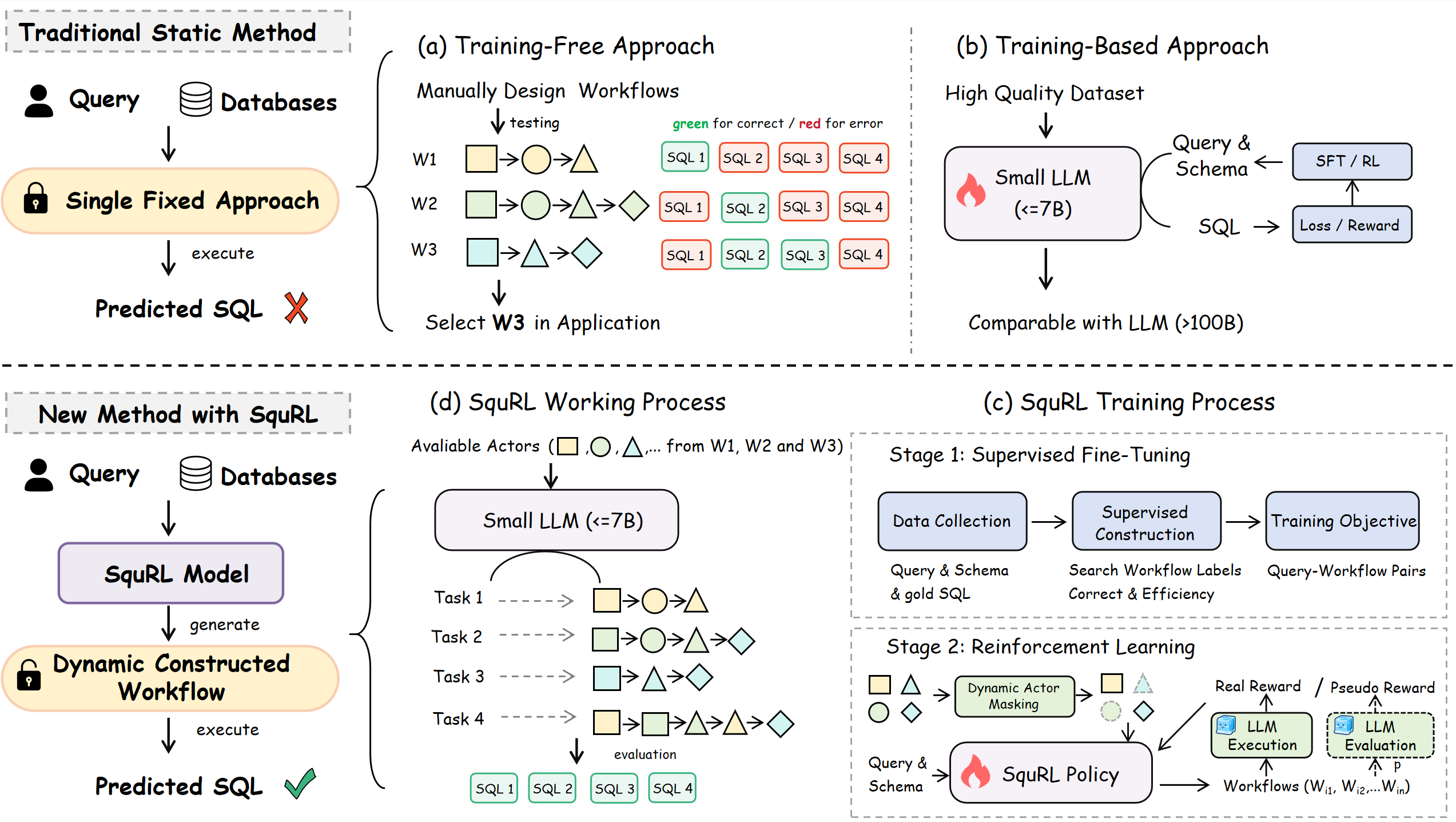}
\caption{Overview of the SquRL framework. Traditional approaches rely on a single fixed workflow to handle diverse query tasks. In contrast, SquRL dynamically constructs workflows tailored to each query, enabling more flexible, accurate, and robust SQL prediction.}
\label{fig:retention_rate}
\end{figure*}

\section{Method}
Learning a policy over the combinatorial workflow space is challenging due to the prevalence of invalid compositions and the fact that different queries require different workflows.
We therefore decompose policy learning into two stages: \emph{supervised fine-tuning} to constrain the policy to executable workflow structures, and \emph{reinforcement learning} to optimize workflow selection based on execution feedback.
\subsection{Supervised Fine-Tuning}

The goal of supervised fine-tuning is to teach the model to generate valid workflow structures. Instead of optimizing execution performance, this stage focuses on learning common workflow patterns across queries, covering a range of structural complexity.

\subsubsection{Training Data Preparation}
We construct supervised training data by transforming standard Text-to-SQL datasets into workflow-level supervision suitable for policy initialization. Specifically, we collect queries from widely used benchmarks, including SynSQL~\cite{li2025omnisql}, Spider~\cite{lei2024spider}, and BIRD~\cite{li2023can}, which together cover diverse schemas and query complexities. We apply light filtering to remove samples with invalid gold SQL or queries that can be solved by trivial single-step generation, retaining only cases where workflow structure materially affects execution. Since these datasets provide only gold SQL annotations, we derive workflow-level supervision by identifying executable workflows that reproduce the correct execution results. We define a set of workflow templates with increasing structural complexity, ranging from simple single-pass to multi-stage compositions (Appendix~\ref{app:workflow_set}). For each query, we progressively explore templates from simpler to more expressive ones and use the first correct workflow as the supervision signal. Queries for which no valid workflow is found are deferred to later optimization. This procedure biases training toward simpler and more efficient workflows whenever sufficient, while still covering queries that require higher-capacity reasoning.

\subsubsection{Training Objective}
Given the constructed workflow-level supervision, we train the model to generate workflows rather than SQL directly. The objective of this stage is to learn a mapping from the query context to a plausible workflow that can produce correct SQL when executed.

Formally, given a query $q$, schema $S$, external knowledge $K$, and the available actor set $\mathcal{A}$, the model generates a workflow $W$ conditioned on this input using the prompt described in Appendix~\ref{app:prompt}:
\begin{equation}
W = \pi_\theta(q, S, K, \mathcal{A}),
\end{equation}
where $\pi_\theta$ denotes the model parameterized by $\theta$. Unlike prior approaches that directly optimize SQL generation, this objective focuses on learning workflow structures. By doing so, reasoning strategy selection is separated from SQL realization, allowing the model to focus on producing structurally valid and executable workflows at this stage.

\subsection{Reinforcement Learning}

In this section, we further refine workflow selection using \emph{SquRL}, a reinforcement learning framework based on execution feedback.  
First, we design a rule-based reward that evaluates workflows at multiple points during execution instead of a single binary correctness signal~(Section~\ref{sec:reward}). Even with such a reward, training can still stall when a small number of high-reward workflows dominate the rollouts. 
To avoid this, we add randomness during training through dynamic actor masking, which forces the model to explore alternative workflow compositions instead of repeatedly selecting the same pattern~(Section~\ref{sec:dynamic}). 
Finally, executing workflows to obtain rewards is costly, especially for complex cases. To keep training efficient, we occasionally replace execution-based rewards with cheaper pseudo rewards, accepting limited noise in exchange for faster feedback~(Section~\ref{sec:pseudo}).

\subsubsection{Rule-based Reward}\label{sec:reward}
In practice, reinforcement learning feedback is obtained by executing the SQL produced by each workflow.
Instead of relying solely on final execution correctness as in prior work~\cite{yao2025arctic}, we augment this supervision signal by incorporating additional signals from the execution process itself. Specifically, we synthesize the reward from five aspects: \emph{Format Reward}, \emph{Timeout Penalty}, \emph{Execution Reward}, \emph{Result Reward}, and \emph{Time Reward}. These components are applied sequentially in order of increasing execution cost, and evaluation terminates early once a workflow fails at any stage. We next describe each reward component in detail.


\paragraph{Format Reward.} Before workflow execution, we verify whether the output adheres to the required structural format. Specifically, we require the reasoning trace to be enclosed within \texttt{<think>...</think>} tags and the final workflow to be wrapped inside \texttt{<answer>...</answer>} using the syntax \texttt{list[...]}. Moreover, the generated workflow can only utilize the available actors from the provided actor pool. Based on this, we assign a small reward to encourage the policy to rollout valid workflows.
\begin{equation}
    R_f = 
    \begin{cases} 
        +0.5, & \text{if format is correct} \\
        -0.5, & \text{otherwise}
    \end{cases}
\end{equation}

\paragraph{Timeout penalty.} To avoid training process blocked by erroneous or overly complex workflows, we set a maximum result waiting time for each workflow execution. In practice, we set this timeout parameter to 5 minutes. When execution time exceeds this threshold, we just neglect the final result and assign a negative reward.
\begin{equation}
    R_{\text{timeout}} = 
    \begin{cases} 
        -0.5, & \text{if execution timeout} \\
        0, & \text{otherwise}
    \end{cases}
\end{equation}

\paragraph{Execution Reward.} Before correctness evaluation, we verify whether workflows generate executable SQL queries. If the workflow generates multiple SQL statements or the generated SQL execution failed due to syntax errors, we assign a negative reward and skip the subsequent process.
\begin{equation}
    R_e = 
    \begin{cases} 
        +1, & \text{if generated SQL executable} \\
        -1, & \text{otherwise}
    \end{cases}
\end{equation}

\paragraph{Result Reward.} Based on comparison with the gold SQL query results, we further evaluate whether the generated SQL returns the required data. We use execution accuracy(EX) metric to measure query correctness. When the execution result is correct, we assign a larger positive reward to the policy. 


\begin{equation}
    R_r = 
    \begin{cases} 
        +1.5, & \text{if execution result is correct} \\
        -1.5, & \text{otherwise}
    \end{cases}
\end{equation}

\paragraph{Time Reward.} Conditioned on result correctness, we account for runtime differences among workflows to guide the policy toward more efficient choices.
\begin{equation}
    R_t = 0.5\times\frac{timeout - time}{timeout},
\end{equation}
where $time$ denotes the workflow execution time and $timeout$ is as defined previously.
\subsubsection{Dynamic Actor Masking}
\label{sec:dynamic}

To mitigate over-concentration on a small set of high-reward workflows during training, we introduce a dynamic actor masking mechanism. During training, each actor is independently retained with probability $r \in (0, 1]$, yielding a reduced actor pool for the current rollout. The model is then restricted to construct workflows using only the retained actors, which effectively perturbs the available action space across rollouts.

Formally, given a masking vector $m$, the workflow space available for a query
$q$ is defined as
\begin{equation}
    \Omega_q(m) = \{ (T, A) \in \Omega \mid A \subseteq \{ a_i \mid m_i(q) = 1 \} \},
\end{equation}
where $m_i(q) = 1$ indicates that actor $a_i$ is available for query $q$.

By randomly perturbing actor availability, this mechanism forces the model to explore alternative workflow compositions instead of repeatedly selecting the same high-reward patterns. When such workflows receive positive feedback, they provide corrective signals that reduce over-reliance on a small set of dominant workflows. In practice, we use a higher retention rate $r$ for more complex queries to increase the likelihood of obtaining informative rewards.

Empirically, we observe that without dynamic masking, training quickly concentrates on a small set of typical workflows, a phenomenon we analyze in detail in Section~\ref{sec:analysis}.

\subsubsection{Pseudo Reward}
\label{sec:pseudo}
To reduce the cost of reward evaluation during reinforcement learning, we introduce a \emph{pseudo reward} mechanism inspired by prior LLM-as-a-judge approaches~\cite{kwon2023reward, gunjal2025rubrics}, which replaces a portion of execution-based feedback with LLM-based evaluations. This is particularly important for complex workflows, whose execution often requires substantial time and computational resources.

In practice, we replace full workflow execution with an LLM-based evaluation with a fixed probability $p$.
Rather than asking the LLM to judge correctness in isolation, we formulate the evaluation as a pairwise comparison: the LLM is asked to assess whether the current rollout workflow is better than a baseline workflow. This relative comparison provides a more stable and informative signal than absolute judgments. Based on the LLM’s output, the pseudo reward is computed as
\begin{equation}
    R_\text{pseudo} = 
    \begin{cases} 
        3 + 0.5 \times s, & \text{if the rollout is preferred} \\
        - 0.5 \times s, & \text{otherwise,}
    \end{cases}
\end{equation}
where $s$ denotes the confidence score returned by the LLM.
Accordingly, the choice of baseline depends on the availability of ground-truth supervision. If a training sample contains verified ground-truth workflows, we directly use them as baselines. Otherwise, we prompt a stronger LLM to generate a baseline workflow using the same input, which serves as a reference for comparison.
In addition, to make the LLM-based evaluation more structured, we summarize a small set of workflow evaluation principles and include them in the judging prompt. These dprinciples are described in Appendix~\ref{principle}.

\subsubsection{Training Objective}
We adopt Group Relative Policy Optimization (GRPO)~\cite{shao2024deepseekmath} algorithm for RL-training. Specifically, for each prompt $x$ including the task context $(q, S, K)$ and the available actors $A$, the sampling policy $\pi_{\theta_{\mathrm{old}}}$ rollouts $G$ candidate responses $\{y_1, y_2, \ldots, y_G\}$, from which workflows $\{W_1, W_2, \ldots, W_G\}$ are parsed. The reward score $R_i$ for each workflow $W_i$ is defined as:
\begin{equation}
    R(x, W_i) = \lambda \cdot R_{\text{real}}(x, W_i) + (1-\lambda) \cdot R_{\text{pseudo}}(x, W_i),
\end{equation}
where $\lambda \in \{0, 1\}$ is a binary indicator with $P(\lambda=1)=1-p$, determining whether the real reward is replaced by a pseudo reward. Then, GRPO computes advantages via group-wise reward normalization:
\begin{equation}
    A_i = \frac{R_i - \mu(R_i)}{\sigma(R_i)}.
\end{equation}
Then, the objective of GRPO is to maximize: 

\begin{equation}
\small
\begin{aligned}
\mathcal{L}_{\mathrm{GRPO}}(\theta)
&=
\mathbb{E}_{(x,\{y_i\}_{i=1}^G)\sim\mathcal{D},\,\pi_{\theta_{\mathrm{old}}}}
\Bigg[
\frac{1}{G}\sum_{i=1}^G
\frac{1}{|y_i|}\sum_{t=1}^{|y_i|}\\
&\min\Big(
r_\theta(x,y_i^t) A_i,\;
\mathrm{clip}\!\left(r_\theta(x,y_i^t),1-\epsilon,1+\epsilon\right) A_i
\Big)
\\
&-
\beta\,\mathrm{KL}\!\left(\pi_\theta(\cdot|x)\,\|\,\pi_{\mathrm{ref}}(\cdot|x)\right)
\Bigg],
\end{aligned}
\end{equation}

where $r_\theta(x,y_i^t)$ is the importance sampling ratio used to balance the discrepancy between the reasoning engine and training engine under the on-policy setting:
\begin{equation}
r_\theta(x,y_i^{\leq t})
= \frac{\pi_\theta(y_i^t \mid x, y_i^{<t})}{\pi_{\theta_{\mathrm{old}}}(y_i^t \mid x, y_i^{<t})}.
\end{equation}

\begin{table*}[t]
\centering
\small
\caption{Performance comparison across different Text-to-SQL benchmarks. For each query, we generate five workflows and select the final SQL result through majority voting based on the execution results.
}
\label{tab:main_results}
\setlength{\tabcolsep}{4pt}
\begin{tabular}{lcccccccc}
\toprule
\multirow{2}{*}{\textbf{Methods}} & \multirow{2}{*}{\textbf{Spider-Dev}} & \multirow{2}{*}{\textbf{Spider-Test}} & \multirow{2}{*}{\textbf{Bird-Dev}} & \multicolumn{5}{c}{\textbf{SynSQL}} \\
\cmidrule(lr){5-9}
& & & & \textbf{Easy} & \textbf{Moderate} & \textbf{Complex} & \textbf{Highly Complex} & \textbf{All} \\
\midrule
DIN-SQL & 83.14 & 82.07 & 59.62 & 68.11 & 42.19 & 36.21 & 29.57 & 44.17\\
MAC-SQL & 82.07 & 82.53 & 61.77 & 77.41 & 51.16 & 45.18 & 37.87 & 53.08\\
LinkAlign & 81.94 & 81.79 & 59.75 & 65.12 & 47.51 & 46.18 & 32.26 & 47.92\\
RSL-SQL & 84.30 & 83.74 & 65.82 & 78.01 & 61.46 & 50.83 & 38.54 & 57.42\\
CHESS & 79.20 & 79.56 & 63.10 & 72.51 & 45.08 & 39.35 & 28.90 & 48.03 \\
\midrule
SquRL-1.5B & 85.10 & 84.83 & 64.47 & 77.08 & 57.81 & 52.16 & 38.87 & 56.67 \\
SquRL-3B & 85.55 & 84.32 & 65.59 & 75.96 & 65.89 & 56.68 & 44.02 & 60.85 \\
SquRL-7B & \textbf{86.27} & \textbf{86.58} & \textbf{67.60} & \textbf{82.06} & \textbf{68.78} & \textbf{59.47} & \textbf{46.51} & \textbf{64.42} \\
\bottomrule
\end{tabular}
\end{table*}

\section{Experiments}

\subsection{Experimental Setup}

\paragraph{Datasets.} 
We evaluate SquRL on three widely adopted Text-to-SQL benchmarks, Spider~\cite{yu2018spider}, BIRD~\cite{li2023can}, Spider 2.0~\cite{lei2024spider}, and SynSQL~\cite{li2025omnisql} datasets, which collectively cover a broad range of query difficulty, schema heterogeneity, and domain diversity, enabling comprehensive assessment under real-world settings. 
Dataset details are provided in Appendix~\ref{app:setup}.

\paragraph{Baselines.} We compare SquRL with five advanced baselines integrated into Squrve: DIN-SQL~\cite{pourreza2023din}, CHESS~\cite{talaei2024chess}, MAC-SQL~\cite{wang2025mac}, RSL-SQL~\cite{cao2024rsl}, and LinkAlign~\cite{wang2025linkalign}. These methods employ diverse reasoning strategies, including chain-of-thought~\cite{wei2022chain}, self-consistency~\cite{wang2022self}, and multi-agent collaboration, which exhibit improved performance on complex Text-to-SQL queries. Additionally, we employ actors decoupled from these methods to support dynamic workflow construction.

\paragraph{Evaluation Metrics.} Following standard Text-to-SQL practice, we primarily use Execution Accuracy (EX) as our evaluation metric, which assesses SQL correctness by comparing execution results against gold SQL statements. 
{We follow~\citet{ma2025sql} and intentionally avoid Exact Match metric, since logically equivalent SQL queries may differ syntactically. }

\paragraph{Implementations.} We develop our reinforcement learning framework based on VERL~\cite{sheng2024hybridflow}. We execute the workflows and evaluate the final SQL results using Squrve as the backend sandbox environment. 
{For policy model, we use Qwen2.5-7B-Instruct~\footnote{{https://huggingface.co/Qwen/Qwen2.5-7B-Instruct}} as the backbone. For actors that require stronger reasoning, we additionally employ Qwen-Plus via API access. }

\subsection{Main Results}

Table~\ref{tab:main_results} summarizes the main experimental results, comparing
SquRL with static workflow baselines across multiple Text-to-SQL benchmarks.
Overall, SquRL consistently outperforms all static baselines across different parameter budgets, demonstrating the effectiveness of dynamic workflow construction. 
Notably, even at small model scales, SquRL exhibits strong gains: SquRL-1.5B
already surpasses the best static workflow on both Spider-Dev and Spider-Test,
indicating that dynamic workflow selection can compensate for limited model
capacity. As the model size increases, this advantage becomes more pronounced,
suggesting that larger models are better able to exploit the flexibility offered
by dynamic workflow construction.
At the largest scale, SquRL-7B achieves the best overall performance across all
benchmarks, reaching 86.27\% on Spider-Dev, 86.58\% on Spider-Test, and 67.60\% on
Bird-Dev.
Taken together, these results demonstrate that dynamic workflow construction provides consistent and scalable improvements, with pronounced improvements on challenging queries.
Additionally, as results shown in Table~\ref{tab:method_performance}, SquRL-7B achieves the highest 44.97\% score on the challenging Spider2.0-Lite benchmark. 
To ensure a fair comparison with prior methods, we align the actor backbone by adopting DeepSeek-R1 as the API model, and the resulting performance highlights SquRL’s effectiveness in handling complex, real-world queries.

\begin{table}[t]
\small
\centering
\caption{Performance Comparison of Different Methods on Spider2.0-Lite benchmark.}
\label{tab:method_performance}
\begin{tabular}{@{}lc@{}}
\toprule
\textbf{Methods} & \textbf{Accuracy (\%)} \\
\midrule
DIN-SQL + GPT-4o & 1.46 \\
CHESS + GPT-4o & 3.83 \\
DailSQL + GPT-4o & 5.68 \\
LinkAlign + Deepseek-R1 & 33.09 \\
RSL-SQL + Deepseek-R1 & 33.09 \\
\midrule
SquRL-7B + Deepseek-R1 & \textbf{49.18} \\
\bottomrule
\end{tabular}
\end{table}
\begin{tcolorbox}[colback=blue!5!white,colframe=blue!55!black,width=.5\textwidth,title={Main finding for Q2}]{
\textit{SquRL learns a robust policy that dynamically selects effective workflows across diverse Text-to-SQL benchmarks, consistently identifying near-optimal workflow choices under varying query distributions.}
}
\end{tcolorbox}
\subsection{Further Analysis}\label{sec:analysis}

\paragraph{Part \textrm{I}: Complexity Analysis.} We evaluate SquRL across difficulty levels using the SynSQL dataset which is partitioned into Simple, Moderate, Complex, and Highly Complex subsets. As shown in Table~\ref{tab:main_results}, SquRL consistently outperforms static baselines at all difficulty levels, with performance gap scaling with query complexity. 
While the dynamic policy achieves a modest +4.05\% improvement on Simple queries (82.06\% \vs 78.01\%), the performance gap increases to +7.97\% on Highly Complex queries where SquRL-7B achieves 46.51\% accuracy compared to RSL-SQL's 38.54\%. These results show that dynamic workflows are more effective and generalized on complex queries that require diverse reasoning strategies which cannot be handled by single static workflows.

\paragraph{Part \textrm{II}: Backend Analysis.} 
The frozen backend model that executes the dynamically generated workflows directly determines reward quality during SquRL training, thereby influencing policy optimization. As demonstrated in Table~\ref{tab:api_backbone}, stronger backends consistently yield superior performance across all benchmarks. These results suggest that weaker backends are more likely to generate invalid SQL during complex workflow execution, which reduces reward accuracy and hinders effective learning. Moreover, SquRL derives substantial benefits from stronger backends that provide more reliable execution feedback, particularly for complex and long-horizon query distributions.

\begin{table}[H]
\centering
\small
\setlength{\tabcolsep}{4pt}
\caption{Impact of backend models on SquRL performance. We use Qwen2.5-1.5B-Instruct as the policy model with different backend models executing the generated workflows during training.}
\begin{tabular}{lccc}
\toprule
\textbf{API Backbone} & \textbf{Spider-Dev} & \textbf{Spider-Test} & \textbf{Bird-dev} \\
\midrule
Qwen-Plus & 85.10 & 84.83 & 64.47 \\
Qwen-Turbo & 83.25 & 83.97 & 61.69 \\
GLM-4.7 & 83.56 & 82.78 & 63.14 \\
Deepseek-V3.2 & 85.80 & 85.95 & 63.10 \\
\bottomrule
\end{tabular}
\label{tab:api_backbone}
\end{table}

\paragraph{Part \textrm{III}: Dynamic Masking Analysis.}
We evaluate the robustness of SquRL under varying actor availability by adjusting the retention rate r from 0.1 to 1.0, where r=1.0 indicates that all actors are accessible for workflow construction. 
Figure~\ref{fig:retention_rate} compares the performance of models trained with and without dynamic actor masking across different retention rates. SquRL models trained with dynamic masking exhibit robust performance across all retention rates, while those trained on the full actor pool degrade sharply when fewer actors are available. These results indicate that dynamic masking prevents overfitting to specific high-performing actors, enabling the policy to learn flexible compensatory strategies.
\begin{figure}[H]
\centering
\includegraphics[width=0.4\textwidth]{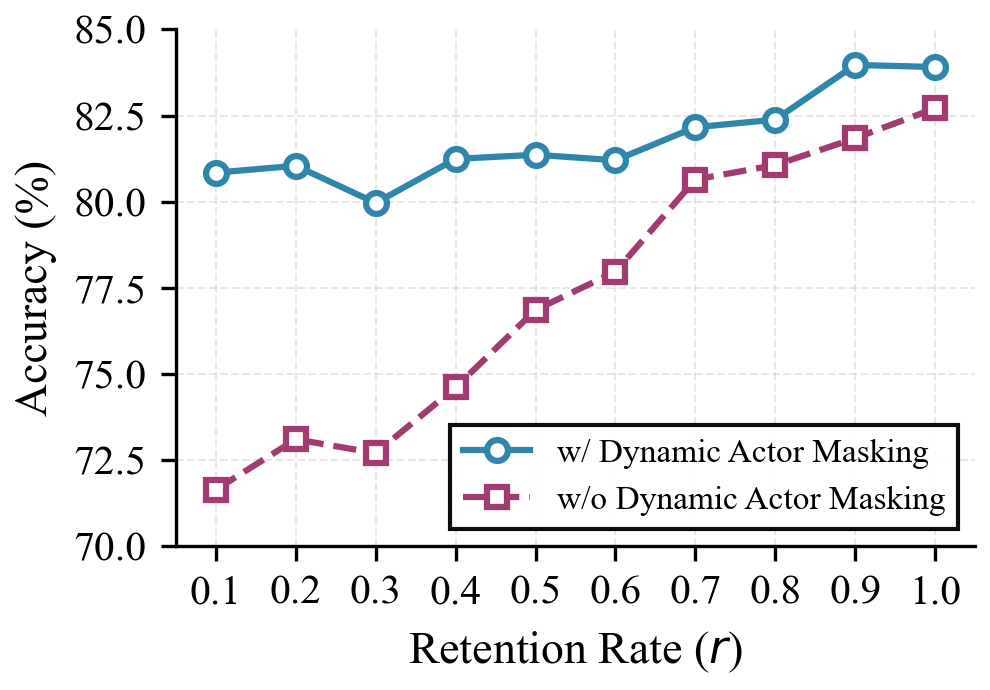}
\caption{Performance comparison with and without Dynamic Actor Masking (DAM) across different retention rates.}
\label{fig:retention_rate}
\end{figure}

\paragraph{Part \textrm{IV}: Pseudo Reward Analysis.}
We investigate the impact of pseudo rewards at different $p$, which is the probability of replacing the real execution reward with LLM-evaluated scores.
As shown in Table~\ref{tab:pseudo_results}, using a relatively small replacement ratio such as $p=0.1$ unexpectedly improves performance across both benchmarks compared to relying solely on true rewards. This occurs because pseudo rewards provide denser feedback signals, enabling the policy to learn from partially correct workflows that would otherwise yield no learning signal. However, when
$p$ increases to 0.3, performance drops substantially due to the accumulation of noisy rewards that misguide policy learning.

\begin{table}[h]
\centering
\small
\caption{Performance comparison with different $p$ ratios.}
\begin{tabular}{lcc}
\toprule
Methods & Spider-Dev & Bird-Dev \\
\midrule
True Only & 82.71 & 63.61 \\
10\% Pseudo & 85.10 & 64.47 \\
30\% Pseudo & 80.30 & 61.50 \\
\bottomrule
\end{tabular}
\label{tab:pseudo_results}
\end{table}

\subsection{Ablation Study}
We conduct an ablation study to analyze the individual contributions of supervised fine-tuning (SFT) and reinforcement learning (RL) in SquRL.
As shown in Table~\ref{tab:ablation}, removing SFT reduces execution accuracy and increases format errors, confirming that SFT is essential for learning executable workflow structures. Removing RL further degrades accuracy and significantly increases runtime, indicating that RL is critical for refining workflows into efficient solutions. When both components are removed, the model degenerates into a basic instruction-follower incapable of generating valid workflows. These results demonstrate that SquRL's performance relies on the synergy between SFT and RL rather than either component alone.

\begin{table}[H]
\centering
\caption{Ablation study isolating the contributions of SFT and RL in SquRL. Removing either component degrades performance, with the combined removal reducing the system to a generic instruction-following model.}
\label{tab:ablation}
\small
\begin{tabular}{lccc}
\toprule
Method & Acc.(\%) & Format Err.(\%) & Time(s) \\
\midrule
All & 83.27 & 0.29 & 4.41 \\
w/o SFT & 78.94 & 1.26 & 5.26 \\
w/o RL & 77.37 & 3.87 & 10.51 \\
w/o both & 18.47 & 77.37 & 7.19 \\
\bottomrule
\end{tabular}
\end{table}

\section{Conclusion}
We present SquRL, a reinforcement learning framework that enables dynamic workflow construction for Text-to-SQL. Through theoretical analysis and empirical validation, we demonstrate that adaptive workflow selection consistently outperforms any static pipeline. Experiments
show that SquRL achieves substantial improvements on complex and out-of-distribution queries, establishing dynamic policy as a principled alternative to manual workflow design.

\section*{Impact Statements}
This paper presents work whose goal is to advance the field of machine learning. There are many potential societal consequences of our work, none of which we feel must be specifically highlighted here.
\nocite{langley00}

\bibliography{ref_paper}

@article{li2014constructing,
  title={Constructing an interactive natural language interface for relational databases},
  author={Li, Fei and Jagadish, Hosagrahar V},
  journal={Proceedings of the VLDB Endowment},
  volume={8},
  number={1},
  pages={73--84},
  year={2014},
  publisher={VLDB Endowment}
}

@inproceedings{sutskever2014sequence,
  title={Sequence to sequence learning with neural networks},
  author={Sutskever, Ilya and Vinyals, Oriol and Le, Quoc V},
  booktitle={Advances in Neural Information Processing Systems},
  volume={27},
  pages={3104--3112},
  year={2014}
}

@inproceedings{vaswani2017attention,
  title={Attention is all you need},
  author={Vaswani, Ashish and Shazeer, Noam and Parmar, Niki and Uszkoreit, Jakob and Jones, Llion and Gomez, Aidan N and Kaiser, {\L}ukasz and Polosukhin, Illia},
  booktitle={Advances in Neural Information Processing Systems},
  volume={30},
  pages={5998--6008},
  year={2017}
}

@article{zhong2017seq2sql,
  title={Seq2sql: Generating structured queries from natural language using reinforcement learning},
  author={Zhong, Victor and Xiong, Caiming and Socher, Richard},
  journal={arXiv preprint arXiv:1709.00103},
  year={2017}
}

@article{choi2021ryansql,
  title={Ryansql: Recursively applying sketch-based slot fillings for complex text-to-sql in cross-domain databases},
  author={Choi, DongHyun and Shin, Myeong Cheol and Kim, EungGyun and Shin, Dong Ryeol},
  journal={Computational Linguistics},
  volume={47},
  number={2},
  pages={309--332},
  year={2021}
}

@inproceedings{bogin2019representing,
  title={Representing schema structure with graph neural networks for text-to-SQL parsing},
  author={Bogin, Ben and Berant, Jonathan and Gardner, Matt},
  booktitle={Proceedings of the 57th Annual Meeting of the Association for Computational Linguistics},
  pages={4560--4565},
  year={2019}
}

@inproceedings{devlin2019bert,
  title={Bert: Pre-training of deep bidirectional transformers for language understanding},
  author={Devlin, Jacob and Chang, Ming-Wei and Lee, Kenton and Toutanova, Kristina},
  booktitle={Proceedings of the 2019 conference of the North American chapter of the association for computational linguistics: human language technologies, volume 1 (long and short papers)},
  pages={4171--4186},
  year={2019}
}

@article{yin2020tabert,
  title={TaBERT: Pretraining for joint understanding of textual and tabular data},
  author={Yin, Pengcheng and Neubig, Graham and Yih, Wen-tau and Riedel, Sebastian},
  journal={arXiv preprint arXiv:2005.08314},
  year={2020}
}

@inproceedings{scholak2021picard,
  title={PICARD: Parsing incrementally for constrained auto-regressive decoding from language models},
  author={Scholak, Torsten and Schucher, Nathan and Bahdanau, Dzmitry},
  booktitle={Proceedings of the 2021 Conference on Empirical Methods in Natural Language Processing},
  pages={9895--9901},
  year={2021}
}

@inproceedings{li2023resdsql,
  title={Resdsql: Decoupling schema linking and skeleton parsing for text-to-sql},
  author={Li, Haoyang and Zhang, Jing and Li, Cuiping and Chen, Hong},
  booktitle={Proceedings of the AAAI Conference on Artificial Intelligence},
  volume={37},
  number={11},
  pages={13067--13075},
  year={2023}
}

@inproceedings{li2023graphix,
  title={Graphix-t5: Mixing pre-trained transformers with graph-aware layers for text-to-sql parsing},
  author={Li, Jinyang and Hui, Binyuan and Cheng, Reynold and Qin, Bowen and Ma, Chenhao and Huo, Nan and Huang, Fei and Du, Wenyu and Si, Luo and Li, Yongbin},
  booktitle={Proceedings of the AAAI conference on artificial intelligence},
  volume={37},
  number={11},
  pages={13076--13084},
  year={2023}
}

@article{pourreza2023din,
  title={Din-sql: Decomposed in-context learning of text-to-sql with self-correction},
  author={Pourreza, Mohammadreza and Rafiei, Davood},
  journal={Advances in Neural Information Processing Systems},
  volume={36},
  pages={36339--36348},
  year={2023}
}

@article{gao2024text,
  title={Text-to-SQL empowered by large language models: A benchmark evaluation},
  author={Gao, Dawei and Wang, Haibin and Li, Yaliang and Sun, Xiuyu and Qian, Yue and Ding, Bolin and Zhou, Jingren},
  journal={Proceedings of the VLDB Endowment},
  volume={17},
  number={5},
  pages={1132--1145},
  year={2024}
}

@article{dong2023c3,
  title={C3: Zero-shot text-to-sql with chatgpt},
  author={Dong, Xuemei and Zhang, Chao and Ge, Yuhang and Mao, Yuren and Gao, Yunjun and Lin, Jinshu and Lou, Dongfang and others},
  journal={arXiv preprint arXiv:2307.07306},
  year={2023}
}

@inproceedings{wang2025mac,
  title={Mac-sql: A multi-agent collaborative framework for text-to-sql},
  author={Wang, Bing and Ren, Changyu and Yang, Jian and Liang, Xinnian and Bai, Jiaqi and Chai, Linzheng and Yan, Zhao and Zhang, Qian-Wen and Yin, Di and Sun, Xing and others},
  booktitle={Proceedings of the 31st International Conference on Computational Linguistics},
  pages={540--557},
  year={2025}
}

@article{cao2024rsl,
  title={Rsl-sql: Robust schema linking in text-to-sql generation},
  author={Cao, Zhenbiao and Zheng, Yuanlei and Fan, Zhihao and Zhang, Xiaojin and Chen, Wei and Bai, Xiang},
  journal={arXiv preprint arXiv:2411.00073},
  year={2024}
}

@article{caferoglu2024esql,
  title={E-sql: Direct schema linking via question enrichment in text-to-sql},
  author={Cafero{\u{g}}lu, Hasan Alp and Ulusoy, {\"O}zg{\"u}r},
  year={2024}
}

@article{zhang2023act,
  title={ACT-SQL: In-context learning for text-to-SQL with automatically-generated chain-of-thought},
  author={Zhang, Hanchong and Cao, Ruisheng and Chen, Lu and Xu, Hongshen and Yu, Kai},
  journal={arXiv preprint arXiv:2310.17342},
  year={2023}
}

@inproceedings{tai2023exploring,
  title={Exploring Chain of Thought Style Prompting for Text-to-SQL},
  author={Tai, Chang-Yu and Chen, Ziru and Zhang, Tianshu and Deng, Xiang and Sun, Huan},
  booktitle={Proceedings of the 2023 Conference on Empirical Methods in Natural Language Processing},
  pages={5376--5393},
  year={2023}
}

@inproceedings{ren2024purple,
  title={Purple: Making a large language model a better sql writer},
  author={Ren, Tonghui and Fan, Yuankai and He, Zhenying and Huang, Ren and Dai, Jiaqi and Huang, Can and Jing, Yinan and Zhang, Kai and Yang, Yifan and Wang, X Sean},
  booktitle={2024 IEEE 40th International Conference on Data Engineering (ICDE)},
  pages={15--28},
  year={2024},
  organization={IEEE}
}

@article{wang2022self,
  title={Self-consistency improves chain of thought reasoning in language models},
  author={Wang, Xuezhi and Wei, Jason and Schuurmans, Dale and Le, Quoc and Chi, Ed and Narang, Sharan and Chowdhery, Aakanksha and Zhou, Denny},
  journal={arXiv preprint arXiv:2203.11171},
  year={2022}
}

@article{li2024codes,
  title={Codes: Towards building open-source language models for text-to-sql},
  author={Li, Haoyang and Zhang, Jing and Liu, Hanbing and Fan, Ju and Zhang, Xiaokang and Zhu, Jun and Wei, Renjie and Pan, Hongyan and Li, Cuiping and Chen, Hong},
  journal={Proceedings of the ACM on Management of Data},
  volume={2},
  number={3},
  pages={1--28},
  year={2024},
  publisher={ACM New York, NY, USA}
}

@article{pourreza2024dts,
  title={DTS-SQL: Decomposed Text-to-SQL with Small Large Language Models},
  author={Pourreza, Mohammadreza and Rafiei, Davood},
  booktitle={Findings of the Association for Computational Linguistics: EMNLP 2024},
  pages={8212--8220},
  year={2024}
}

@article{chen2024open,
  title={Open-sql framework: Enhancing text-to-sql on open-source large language models},
  author={Chen, Xiaojun and Wang, Tianle and Qiu, Tianhao and Qin, Jianbin and Yang, Min},
  journal={arXiv preprint arXiv:2405.06674},
  year={2024}
}

@article{pourreza2023evaluating,
  title={Evaluating cross-domain text-to-SQL models and benchmarks},
  author={Pourreza, Mohammadreza and Rafiei, Davood},
  booktitle={Proceedings of the 2023 Conference on Empirical Methods in Natural Language Processing, {EMNLP} 2023},
  pages={1601--1611},
  year={2023}
}

@inproceedings{shi2022natural,
  title={Natural language to code translation with execution},
  author={Shi, Freda and Fried, Daniel and Ghazvininejad, Marjan and Zettlemoyer, Luke and Wang, Sida I},
  booktitle={Proceedings of the 2022 Conference on Empirical Methods in Natural Language Processing},
  pages={3533--3546},
  year={2022}
}

@inproceedings{ni2023lever,
  title={Lever: Learning to verify language-to-code generation with execution},
  author={Ni, Ansong and Iyer, Srini and Radev, Dragomir and Stoyanov, Veselin and Yih, Wen-tau and Wang, Sida and Lin, Xi Victoria},
  booktitle={International Conference on Machine Learning},
  pages={26106--26128},
  year={2023},
  organization={PMLR}
}

@inproceedings{chen2024teaching,
  title={Teaching large language models to self-debug},
  author={Chen, Xinyun and Lin, Maxwell and Sch{\"a}rli, Nathanael and Zhou, Denny},
  booktitle={The Twelfth International Conference on Learning Representations},
  year={2024}
}

@article{le2022coderl,
  title={Coderl: Mastering code generation through pretrained models and deep reinforcement learning},
  author={Le, Hung and Wang, Yue and Gotmare, Akhilesh Deepak and Savarese, Silvio and Hoi, Steven Chu Hong},
  journal={Advances in Neural Information Processing Systems},
  volume={35},
  pages={21314--21328},
  year={2022}
}

@inproceedings{chen2021evaluating,
  title={Evaluating large language models trained on code},
  author={Chen, Mark and Tworek, Jerry and Jun, Heewoo and Yuan, Qiming and Pinto, Henrique Ponde de Oliveira and Kaplan, Jared and Edwards, Harri and Burda, Yuri and Joseph, Nicholas and Brockman, Greg and others},
  booktitle={arXiv preprint arXiv:2107.03374},
  year={2021}
}

@inproceedings{wang2025linkalign,
  title={Linkalign: Scalable schema linking for real-world large-scale multi-database text-to-sql},
  author={Wang, Yihan and Liu, Peiyu and Yang, Xin},
  booktitle={Proceedings of the 2025 Conference on Empirical Methods in Natural Language Processing},
  pages={977--991},
  year={2025}
}

@article{wang2025squrve,
  title={Squrve: A Unified and Modular Framework for Complex Real-World Text-to-SQL Tasks},
  author={Wang, Yihan and Liu, Peiyu and Chen, Runyu and Pu, Jiaxing and Xu, Wei},
  journal={arXiv preprint arXiv:2510.24102},
  year={2025}
}

@article{ma2025sql,
  title={Sql-r1: Training natural language to sql reasoning model by reinforcement learning},
  author={Ma, Peixian and Zhuang, Xialie and Xu, Chengjin and Jiang, Xuhui and Chen, Ran and Guo, Jian},
  journal={arXiv preprint arXiv:2504.08600},
  year={2025}
}

@article{yao2025arctic,
  title={Arctic-Text2SQL-R1: Simple Rewards, Strong Reasoning in Text-to-SQL},
  author={Yao, Zhewei and Sun, Guoheng and Borchmann, Lukasz and Shen, Zheyu and Deng, Minghang and Zhai, Bohan and Zhang, Hao and Li, Ang and He, Yuxiong},
  journal={arXiv preprint arXiv:2505.20315},
  year={2025}
}

@article{pourreza2025reasoning,
  title={Reasoning-sql: Reinforcement learning with sql tailored partial rewards for reasoning-enhanced text-to-sql},
  author={Pourreza, Mohammadreza and Talaei, Shayan and Sun, Ruoxi and Wan, Xingchen and Li, Hailong and Mirhoseini, Azalia and Saberi, Amin and Arik, Sercan and others},
  journal={arXiv preprint arXiv:2503.23157},
  year={2025}
}

@article{shi2025survey,
  title={A survey on employing large language models for text-to-sql tasks},
  author={Shi, Liang and Tang, Zhengju and Zhang, Nan and Zhang, Xiaotong and Yang, Zhi},
  journal={ACM Computing Surveys},
  volume={58},
  number={2},
  pages={1--37},
  year={2025},
  publisher={ACM New York, NY}
}

@article{hong2025next,
  title={Next-generation database interfaces: A survey of llm-based text-to-sql},
  author={Hong, Zijin and Yuan, Zheng and Zhang, Qinggang and Chen, Hao and Dong, Junnan and Huang, Feiran and Huang, Xiao},
  journal={IEEE Transactions on Knowledge and Data Engineering},
  year={2025},
  publisher={IEEE}
}

@article{jiang2024survey,
  title={A survey on large language models for code generation},
  author={Jiang, Juyong and Wang, Fan and Shen, Jiasi and Kim, Sungju and Kim, Sunghun},
  journal={arXiv preprint arXiv:2406.00515},
  year={2024}
}

@article{li2023can,
  title={Can llm already serve as a database interface? a big bench for large-scale database grounded text-to-sqls},
  author={Li, Jinyang and Hui, Binyuan and Qu, Ge and Yang, Jiaxi and Li, Binhua and Li, Bowen and Wang, Bailin and Qin, Bowen and Geng, Ruiying and Huo, Nan and others},
  journal={Advances in Neural Information Processing Systems},
  volume={36},
  pages={42330--42357},
  year={2023}
}

@article{yu2018spider,
  title={Spider: A large-scale human-labeled dataset for complex and cross-domain semantic parsing and text-to-sql task},
  author={Yu, Tao and Zhang, Rui and Yang, Kai and Yasunaga, Michihiro and Wang, Dongxu and Li, Zifan and Ma, James and Li, Irene and Yao, Qingning and Roman, Shanelle and others},
  journal={arXiv preprint arXiv:1809.08887},
  year={2018}
}

@inproceedings{li2025pet,
  title={Pet-sql: A prompt-enhanced two-round refinement of text-to-sql with cross-consistency},
  author={Li, Zhishuai and Wang, Xiang and Zhao, Jingjing and Yang, Sun and Du, Guoqing and Hu, Xiaoru and Zhang, Bin and Ye, Yuxiao and Li, Ziyue and Mao, Hangyu and others},
  booktitle={International Conference on Database Systems for Advanced Applications},
  pages={193--208},
  year={2025},
  organization={Springer}
}

@article{zhu2025elliesql,
  title={Elliesql: Cost-efficient text-to-sql with complexity-aware routing},
  author={Zhu, Yizhang and Jiang, Runzhi and Li, Boyan and Tang, Nan and Luo, Yuyu},
  journal={arXiv preprint arXiv:2503.22402},
  year={2025}
}

@article{li2025omnisql,
  title={Omnisql: Synthesizing high-quality text-to-sql data at scale},
  author={Li, Haoyang and Wu, Shang and Zhang, Xiaokang and Huang, Xinmei and Zhang, Jing and Jiang, Fuxin and Wang, Shuai and Zhang, Tieying and Chen, Jianjun and Shi, Rui and others},
  journal={arXiv preprint arXiv:2503.02240},
  year={2025}
}

@article{lei2024spider,
  title={Spider 2.0: Evaluating language models on real-world enterprise text-to-sql workflows},
  author={Lei, Fangyu and Chen, Jixuan and Ye, Yuxiao and Cao, Ruisheng and Shin, Dongchan and Su, Hongjin and Suo, Zhaoqing and Gao, Hongcheng and Hu, Wenjing and Yin, Pengcheng and others},
  journal={arXiv preprint arXiv:2411.07763},
  year={2024}
}

@article{wei2022chain,
  title={Chain-of-thought prompting elicits reasoning in large language models},
  author={Wei, Jason and Wang, Xuezhi and Schuurmans, Dale and Bosma, Maarten and Xia, Fei and Chi, Ed and Le, Quoc V and Zhou, Denny and others},
  journal={Advances in neural information processing systems},
  volume={35},
  pages={24824--24837},
  year={2022}
}

@article{talaei2024chess,
  title={Chess: Contextual harnessing for efficient sql synthesis},
  author={Talaei, Shayan and Pourreza, Mohammadreza and Chang, Yu-Chen and Mirhoseini, Azalia and Saberi, Amin},
  journal={arXiv preprint arXiv:2405.16755},
  year={2024}
}

@article{shao2024deepseekmath,
  title={Deepseekmath: Pushing the limits of mathematical reasoning in open language models},
  author={Shao, Zhihong and Wang, Peiyi and Zhu, Qihao and Xu, Runxin and Song, Junxiao and Bi, Xiao and Zhang, Haowei and Zhang, Mingchuan and Li, YK and Wu, Yang and others},
  journal={arXiv preprint arXiv:2402.03300},
  year={2024}
}

@article{sheng2024hybridflow,
  title   = {HybridFlow: A Flexible and Efficient RLHF Framework},
  author  = {Guangming Sheng and Chi Zhang and Zilingfeng Ye and Xibin Wu and Wang Zhang and Ru Zhang and Yanghua Peng and Haibin Lin and Chuan Wu},
  year    = {2024},
  journal = {arXiv preprint arXiv: 2409.19256}
}

@article{kwon2023reward,
  title={Reward design with language models},
  author={Kwon, Minae and Xie, Sang Michael and Bullard, Kalesha and Sadigh, Dorsa},
  journal={arXiv preprint arXiv:2303.00001},
  year={2023}
}

@article{gunjal2025rubrics,
  title={Rubrics as rewards: Reinforcement learning beyond verifiable domains},
  author={Gunjal, Anisha and Wang, Anthony and Lau, Elaine and Nath, Vaskar and He, Yunzhong and Liu, Bing and Hendryx, Sean},
  journal={arXiv preprint arXiv:2507.17746},
  year={2025}
}
\bibliographystyle{icml2026}

\newpage
\appendix
\onecolumn
\section{Related Work}\label{related}
\paragraph{Text-to-SQL Systems and Methodologies}
Text-to-SQL has evolved from rule-based systems~\cite{li2014constructing} to deep learning approaches~\cite{sutskever2014sequence, vaswani2017attention, zhong2017seq2sql}, and recently to large language model-based methods. Early neural architectures incorporated intermediate representations~\cite{choi2021ryansql} and graph structures~\cite{bogin2019representing} for structural reasoning. Pre-trained language models such as BERT~\cite{devlin2019bert} and specialized variants~\cite{yin2020tabert, scholak2021picard} enabled schema-aware encoding~\cite{li2023resdsql, li2023graphix}, establishing strong baselines. The emergence of LLMs introduced two primary paradigms: in-context learning (ICL) and fine-tuning. Representative ICL methods include DIN-SQL~\cite{pourreza2023din}, which decomposes queries into multi-stage pipelines with self-correction; DAIL-SQL~\cite{gao2024text}, which optimizes few-shot selection via question skeleton matching; C3~\cite{dong2023c3}, combining schema linking with calibration bias prompting; and MAC-SQL~\cite{wang2025mac}, orchestrating multi-agent collaboration. Recent advances introduced enhanced schema linking~\cite{cao2024rsl, caferoglu2024esql, wang2025linkalign}, chain-of-thought reasoning~\cite{zhang2023act, tai2023exploring}, and self-consistency mechanisms~\cite{ren2024purple, wang2022self}. Fine-tuning approaches adapt open-source models through specialized pre-training~\cite{li2024codes}, multi-task decomposition~\cite{pourreza2024dts}, or domain-specific data augmentation~\cite{chen2024open}. However, all existing methods—whether manually engineered or learned—adopt fixed, static workflows that remain unchanged across queries at inference time, fundamentally limiting adaptability to heterogeneous distributions and out-of-distribution scenarios~\cite{pourreza2023evaluating}.

\paragraph{Reinforcement Learning for Code Generation}
Reinforcement learning has proven effective for improving code generation through execution feedback~\cite{le2022coderl, chen2021evaluating}. In text-to-SQL, RL methods leverage execution results~\cite{zhong2017seq2sql}, train verifiers from execution traces~\cite{ni2023lever}, or enable self-debugging via error explanations~\cite{chen2024teaching}. Execution-guided approaches employ minimum Bayes risk decoding~\cite{shi2022natural} and iterative refinement~\cite{pourreza2023din} to select high-quality candidates. However, existing RL-based text-to-SQL methods optimize token-level policies $\pi_{\text{SQL}}(y \mid q)$ that directly generate SQL statements~\cite{ma2025sql, yao2025arctic, pourreza2025reasoning}, tightly coupling reasoning and generation into monolithic models. This formulation precludes adaptive strategy selection—for instance, routing simple queries through lightweight pipelines while deploying multi-agent decomposition for complex nested queries. In contrast, our work learns a policy over structured workflow space rather than token space, separating \textbf{how to reason} from \textbf{what to generate}, thereby enabling dynamic orchestration of modular components as reusable building blocks.

\section{Theoretical upper bound of dynamic workflow orchestration}\label{app: theory}

\subsection{Problem setup}
Let queries \(q\) be drawn i.i.d. from an unknown distribution \(\mathcal{D}\) over \(\mathcal{Q}\).
Let \(\mathcal{W}=\{W_1,\dots,W_K\}\) be a finite set of workflows.
For each workflow \(W_i\) define the Bernoulli random variable
\[
Y_i(q) := \mathbb{I}\{W_i(q)\ \text{executes correctly}\} \in\{0,1\}.
\]
Denote the success region of \(W_i\) by
\(\mathcal{A}_i := \{q\in\mathcal{Q}: Y_i(q)=1\}\).
Let
\[
p_i := \mathbb{E}_{q\sim\mathcal{D}}[Y_i(q)] = \Pr(q\in\mathcal{A}_i).
\]
The best static workflow's expected execution accuracy is
\[
\mathrm{EX}_{\mathrm{static}} := \max_{1\le i\le K} p_i.
\]
The oracle dynamic selector (which for each query chooses any workflow that succeeds when possible)
achieves
\[
\mathrm{EX}_{\mathrm{dyn}} := \mathbb{E}_{q\sim\mathcal{D}}\Big[\max_{1\le i\le K} Y_i(q)\Big] = \Pr\Big(\bigcup_{i=1}^K \mathcal{A}_i\Big).
\]
Define the gap
\[
\Delta := \mathrm{EX}_{\mathrm{dyn}} - \mathrm{EX}_{\mathrm{static}}.
\]

\subsection{Main theorem and proof}

\begin{theorem}[Non-negativity and characterization of equality]
For any finite workflow set \(\mathcal{W}\),
\(\Delta \ge 0\). Moreover,
\(\Delta = 0\) if and only if there exists an index \(i^*\) such that
\(\Pr\big(\bigcup_{i=1}^K \mathcal{A}_i \setminus \mathcal{A}_{i^*}\big)=0\);
equivalently, the success region of some single workflow covers the union of all success regions almost surely.
\end{theorem}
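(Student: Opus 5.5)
The plan is to work entirely with the set-theoretic form of the quantities fixed in the problem setup: $p_i=\Pr(\mathcal{A}_i)$ and $\mathrm{EX}_{\mathrm{dyn}}=\Pr\big(\bigcup_{j=1}^K\mathcal{A}_j\big)$. Write $U:=\bigcup_{j=1}^K\mathcal{A}_j$. Everything then reduces to monotonicity of probability measures. We implicitly assume each success region $\mathcal{A}_j$ is measurable (i.e.\ each $Y_j$ is a random variable), so that $U$ and all differences are events.

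For non-negativity, fix any index $i$. Since $\mathcal{A}_i\subseteq U$, monotonicity gives $p_i\le \Pr(U)$. Equivalently, $Y_i(q)\le \max_j Y_j(q)$ holds pointwise, and we take expectations. This holds for every $i$, so taking the maximum over the finite index set yields $\mathrm{EX}_{\mathrm{static}}=\max_i p_i\le \Pr(U)=\mathrm{EX}_{\mathrm{dyn}}$, i.e.\ $\Delta\ge 0$.

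For the characterization, we use the disjoint decomposition $U=\mathcal{A}_i\,\dot\cup\,(U\setminus\mathcal{A}_i)$. Additivity gives the identity
\[
\Pr(U)-p_i=\Pr(U\setminus\mathcal{A}_i)\quad\text{for every } i .
\]
Because the index set is finite, the maximum defining $\mathrm{EX}_{\mathrm{static}}$ is attained at some $i^*$, and then $\Delta=\Pr(U\setminus\mathcal{A}_{i^*})$.

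\emph{Sufficiency.} Suppose some $i^*$ satisfies $\Pr(U\setminus\mathcal{A}_{i^*})=0$. Then $p_{i^*}=\Pr(U)$, so $\mathrm{EX}_{\mathrm{static}}\ge \Pr(U)$. Combined with the first part, this forces $\Delta=0$.

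\emph{Necessity.} Suppose $\Delta=0$. Take $i^*$ to be a maximizer, which exists since $K$ is finite. Then $\Pr(U\setminus\mathcal{A}_{i^*})=\Pr(U)-p_{i^*}=\Delta=0$, which is exactly the covering condition.

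Finally, the phrase ``covers the union almost surely'' is just a restatement of $\Pr(U\setminus\mathcal{A}_{i^*})=0$, i.e.\ $Y_{i^*}=\max_jY_j$ almost surely. There is no real obstacle here. The only points needing care are attainment of the maximum, which is where finiteness of $\mathcal{W}$ is used (for infinite families the supremum might not be attained and necessity could fail), and the measurability assumption noted above.
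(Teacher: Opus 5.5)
Your proof is correct and follows essentially the same route as the paper: pointwise domination $Y_i\le\max_j Y_j$ gives $\Delta\ge 0$, and the identity $\Delta=\Pr\bigl(\bigcup_j\mathcal{A}_j\setminus\mathcal{A}_{i^*}\bigr)$ at a maximizer $i^*$ gives the characterization. Your separate sufficiency argument for an arbitrary covering index makes explicit a step the paper leaves implicit, since the paper only evaluates the identity at the maximizer.
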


\begin{proof}
For any fixed \(q\) and any \(i\) we have \(\max_j Y_j(q)\ge Y_i(q)\). Taking expectation and then maximizing over \(i\) gives
\[
\mathrm{EX}_{\mathrm{dyn}}
= \mathbb{E}[\max_j Y_j(q)]
\ge \max_i \mathbb{E}[Y_i(q)] = \mathrm{EX}_{\mathrm{static}},
\]
hence \(\Delta\ge 0\).

Let \(i^*\) be an index achieving \(\mathrm{EX}_{\mathrm{static}}=\Pr(\mathcal{A}_{i^*})\).
By definitions,
\[
\mathrm{EX}_{\mathrm{dyn}} = \Pr\Big(\bigcup_{i=1}^K \mathcal{A}_i\Big).
\]
Thus
\[
\Delta = \Pr\Big(\bigcup_{i=1}^K \mathcal{A}_i\Big) - \Pr(\mathcal{A}_{i^*})
= \Pr\Big(\bigcup_{i=1}^K \mathcal{A}_i \setminus \mathcal{A}_{i^*}\Big).
\]
Therefore \(\Delta=0\) exactly when
\(\Pr\big(\bigcup_{i=1}^K \mathcal{A}_i \setminus \mathcal{A}_{i^*}\big)=0\),
i.e., \(\mathcal{A}_{i^*}\) covers the union almost surely. This proves the characterization.
\end{proof}

\subsection{Quantitative relations with pairwise disagreement}
Define pairwise disagreement probabilities
\[
D_{ij} := \Pr_{q\sim\mathcal{D}}(Y_i(q)\neq Y_j(q)) = \Pr(\mathcal{A}_i\triangle\mathcal{A}_j),
\]
where \(\triangle\) denotes symmetric difference.

Fix \(i^*=\arg\max_i p_i\) (choose any maximizer). For any \(j\) we have the exact identity
\[
\Pr(\mathcal{A}_j\setminus\mathcal{A}_{i^*})
= \frac{p_j - p_{i^*} + D_{i^* j}}{2}.
\]
Consequently,
\[
\Delta = \Pr\Big(\bigcup_{i=1}^K \mathcal{A}_i \setminus \mathcal{A}_{i^*}\Big)
\ge \max_{j}\Pr(\mathcal{A}_j\setminus\mathcal{A}_{i^*})
= \max_{j} \frac{p_j - p_{i^*} + D_{i^* j}}{2}.
\]
\emph{Remark.} The displayed identity follows by writing
\(D_{i^* j} = \Pr(\mathcal{A}_{i^*}\setminus\mathcal{A}_j) + \Pr(\mathcal{A}_j\setminus\mathcal{A}_{i^*})\)
and using \(p_{i^*} = \Pr(\mathcal{A}_{i^*})\), \(p_j=\Pr(\mathcal{A}_j)\).

The inequality shows that disagreement with the best static workflow (as measured by \(D_{i^* j}\))
contributes positively to the gap; however, it also highlights that \emph{pairwise disagreement alone is not sufficient} to guarantee a positive gap unless that disagreement yields mass outside the best static's success region. In particular, it is possible to have \(D_{ij}>0\) for some \(i,j\) while \(\Delta=0\) (see below).

\subsection{Counterexample (pairwise disagreement but zero gap)}
Let \(\mathcal{Q}=\{a,b\}\) with \(\Pr(a)=\Pr(b)=1/2\).
Define workflows with success sets \(\mathcal{A}_1=\{a\}\), \(\mathcal{A}_2=\{b\}\), and \(\mathcal{A}_3=\{a,b\}\).
Then \(D_{12}=1>0\) but \(\bigcup_i \mathcal{A}_i=\mathcal{A}_3\) and hence \(\Delta=0\) because workflow \(3\) (the static best) already covers all solvable queries.
This example demonstrates that pairwise disagreement by itself does not imply \(\Delta>0\).

\subsection{Discussion}
The exact expression \(\Delta=\Pr(\bigcup_i \mathcal{A}_i)-\max_i\Pr(\mathcal{A}_i)\) gives the conceptual interpretation:
the oracle dynamic selector attains the measure of the \emph{union} of success regions, while any single static workflow can only attain the measure of one success region.
Thus the dynamic advantage arises precisely from queries that are solvable by \emph{some} workflow but not by the single best static workflow.
The quantitative identities above allow empirical estimation of how much of the union lies outside the best static success set, and also relate this to pairwise disagreement \(D_{ij}\) when useful.

\section{Experimental Validation of the Theoretical Upper Bound}\label{app:exp_val}
\subsection{Workflow Set and Template Diversity}\label{app:workflow_set}
To empirically validate the theoretical upper bound of dynamic workflow orchestration, we construct a diverse workflow set that spans a wide range of reasoning strategies, computational budgets, and control structures. Each workflow is instantiated from a high-level \emph{template}, which specifies the abstract composition and execution order of functional actors. By analyzing templates rather than concrete actor instantiations, we expose the fundamental structural differences among workflows, enabling a principled study of workflow heterogeneity.

\paragraph{Actor Primitives.} We begin by briefly introducing the functional roles of each actor primitive used across templates:

\begin{itemize}
\item \textbf{Reduce actor.} Eliminates redundant schema elements from large-scale databases to ensure that downstream reasoning operates over a compact and relevant schema subset, improving both efficiency and focus.

\item \textbf{Parse actor.} Performs schema linking by identifying relevant tables and columns for the given query, providing attention signals that guide subsequent SQL generation.

\item \textbf{Generate actor.} Produces complete SQL statements directly from the query and schema, encapsulating standard end-to-end Text-to-SQL generation methods.

\item \textbf{Decompose actor.} Breaks complex queries into a sequence of logically progressive sub-questions, enabling compositional reasoning and interpretable multi-step problem solving.

\item \textbf{Scale actor.} Generates diverse, high-quality SQL candidates via multi-sample decoding or parallel generation, increasing the probability of covering the gold SQL through broader exploration.

\item \textbf{Optimize actor.} Refines candidate SQL queries using environmental feedback (e.g., execution errors or result mismatches), iteratively correcting syntax or semantic inconsistencies.

\item \textbf{Select actor.} Chooses the optimal SQL statement from a candidate pool, typically in collaboration with the scale actor, enabling convergence from broad exploration to a single high-precision output.
\end{itemize}

\noindent These actors span the full reasoning lifecycle, from schema grounding and generation to exploration, refinement, and final decision making.

\paragraph{Template Set.} We design a set of templates that vary systematically along several dimensions: (i) whether schema grounding is explicit (parser), (ii) whether generation is single-shot or diversified (generator vs. scaler), (iii) whether feedback is exploited (optimizer), (iv) whether decision-making is centralized (selector), and (v) whether control flow is linear or branched. The full template set is shown below.

\begin{itemize}
\item \textbf{Template 0:} $[\text{generator}]$. A single-pass, end-to-end generator with no explicit schema grounding, diversification, or feedback.

\item \textbf{Template A:} $[\text{generator}, \text{optimizer}]$. A linear pipeline that augments direct generation with execution-based refinement.

\item \textbf{Template B:} $[[\text{generator}, \text{generator}, \text{generator}], \text{selector}]$. A parallel ensemble of generators followed by a selector, emphasizing diversity and empirical selection without feedback loops.

\item \textbf{Template C:} $[\text{generator}, [\text{optimizer}, \text{optimizer}], \text{selector}]$. A hybrid structure that performs generation followed by multiple rounds of feedback-driven refinement before final selection.

\item \textbf{Template D:} $[\text{parser}, \text{generator}]$. A schema-grounded pipeline where parsing precedes generation, without diversification or feedback.

\item \textbf{Template E:} $[\text{parser}, [\text{scaler}, \text{scaler}], \text{optimizer}, \text{selector}]$. A schema-grounded, diversity-first workflow with multi-sample generation, followed by refinement and final selection.

\item \textbf{Template F:} $[[\text{scaler}, \text{scaler}, \text{scaler}, \text{scaler}], \text{selector}]$. A large-scale exploration template emphasizing massive candidate generation and empirical selection, without explicit parsing or refinement.

\item \textbf{Template G:} $[[\text{generator}, \text{generator}], [\text{scaler}, \text{scaler}], \text{selector}]$. A mixed ensemble that combines deterministic generation with diversified scaling before selection.

\item \textbf{Template H:} $[\text{parser}, \text{generator}, [\text{scaler}, \text{scaler}, \text{scaler}], \text{optimizer}, \text{selector}]$. A deeply structured workflow that integrates schema grounding, initial generation, large-scale diversification, feedback-driven refinement, and final decision making.

\item \textbf{Template I:} $[\text{parser}, [\text{generator}, \text{scaler}], [\text{optimizer}, \text{optimizer}], \text{selector}, \text{optimizer}]$. A multi-stage feedback-centric pipeline that interleaves generation, scaling, repeated optimization, and selection, reflecting maximal structural complexity.
\end{itemize}

\subsection{Empirical Estimation of Workflow Distances}\label{app:dis_estimate}

To empirically validate the theoretical analysis in Section~\ref{sec: upper_bound}, we estimate the pairwise distances between the ten workflows introduced in Section~\ref{app:workflow_set} using a finite sample of queries drawn from the SynSQL dataset. The effectiveness disagreement between workflows $W_i$ and $W_j$ denotes the probability of producing different correctness outcomes:

\begin{equation}
\label{equ:d_sample}
    D_{\text{sample}}(i, j) = \Pr_{q \sim \mathcal{D}} (Y_i(q) \neq Y_j(q))
\end{equation}
Moreover, to capture efficiency differences between two workflows, we introduce the efficiency disagreement:
\begin{equation}
\label{equ:d_efficiency}
    D_{\text{efficiency}}(i, j) = \mathbb{E}_{q \sim \mathcal{D}} \left[ \frac{|t_i(q) - t_j(q)|}{t_i(q) + t_j(q)} \right]
\end{equation}

Where $t_i$ denotes the expected runtime of workflow $W_i$ on query $q$. Therefore, we can define the overall distance between two workflows as:
\begin{equation}
\label{equ:w_d}
    D(W_i, W_j) = \frac{1}{2} \left( D_{\text{sample}}(i, j) + D_{\text{efficiency}}(i, j) \right)
\end{equation}

Since the true expectations in Equations~(\ref{equ:d_sample})--(\ref{equ:w_d}) are intractable, we approximate them via sample averages.

For the effectiveness disagreement defined in Equation~(\ref{equ:d_sample}), we compute the unbiased empirical estimator:
\begin{equation}
    \hat{D}_{\text{sample}}(i, j) = \frac{1}{N} \sum_{k=1}^{N} \mathbb{I}\{Y_i(q_k) \neq Y_j(q_k)\}.
\end{equation}
Here, $\{q_k\}_{k=1}^N$ denotes the sampled query set and $Y_i(q_k)$ indicates whether workflow $i$ executes correctly on query $q_k$.

Similarly, for the efficiency disagreement defined in Equation~(\ref{equ:d_efficiency}), we estimate it using:
\begin{equation}
    \hat{D}_{\text{efficiency}}(i, j) = \frac{1}{N} \sum_{k=1}^{N} \frac{|t_i(q_k) - t_j(q_k)|}{t_i(q_k) + t_j(q_k)}.
\end{equation}

The overall workflow distance is then estimated as:
\begin{equation}
    \hat{D}(i, j) = \frac{1}{2}\left(\hat{D}_{\text{sample}}(i, j) + \hat{D}_{\text{efficiency}}(i, j)\right),
\end{equation}
consistent with Equation~(\ref{equ:w_d}) in the main text. Then, we compute these distances for all pairs of workflows using the ten workflows defined in Section~\ref{app:workflow_set} over the same evaluation corpus described in Section~\ref{sec: upper_bound} The resulting pairwise distance matrix is visualized in Figure~\ref{fig:heatmap}.

\begin{figure}[H]
\centering
\includegraphics[width=0.6\textwidth]{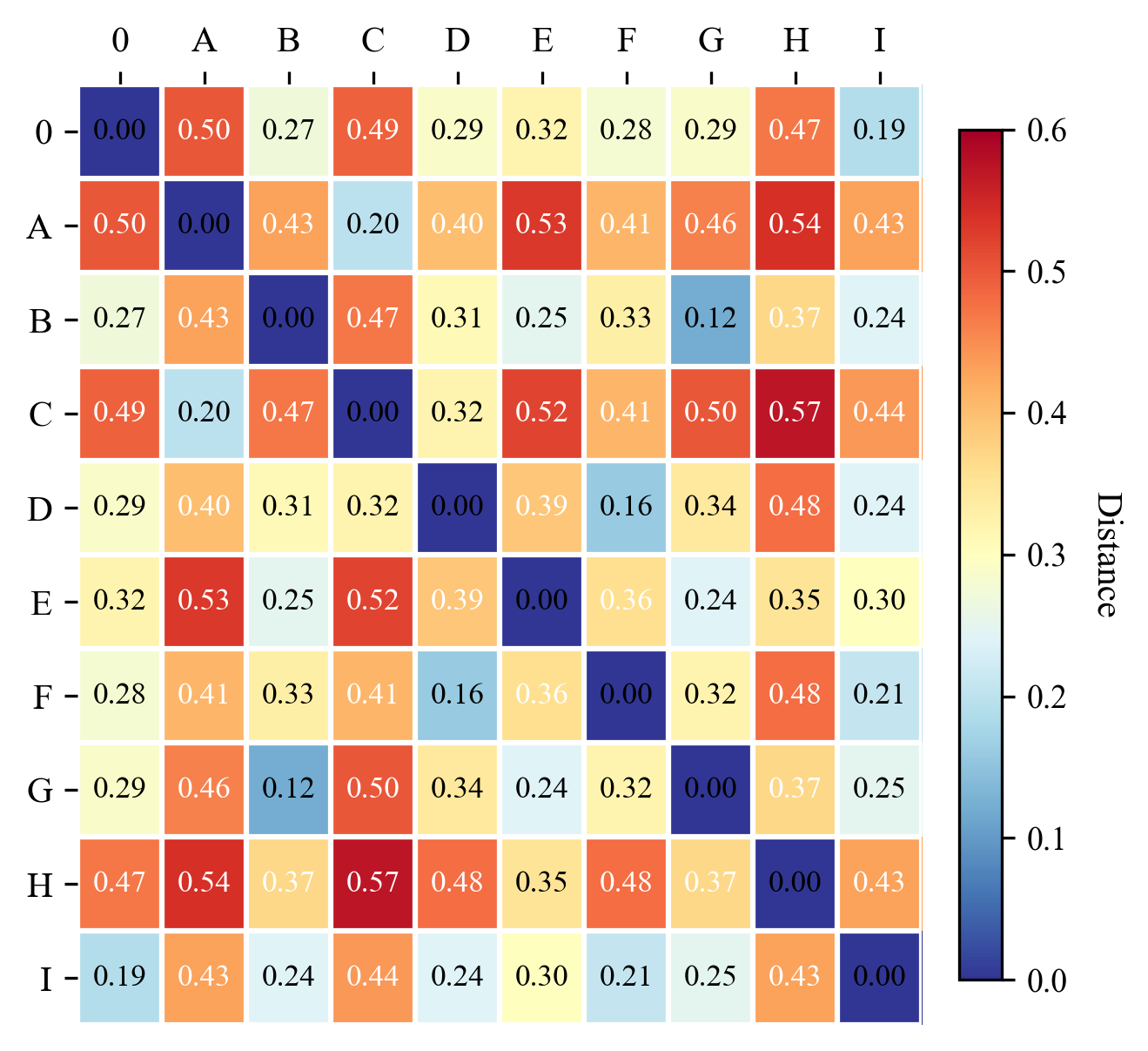}
\caption{Pairwise distance matrix between different workflows.}
\label{fig:heatmap}
\end{figure}

\subsection{Empirical Upper Bound on Efficiency}\label{app:upper_efficiency}
In this subsection, we empirically estimate the upper bound of efficiency gains achievable through dynamic workflow selection, as motivated in Section~4.

\paragraph{Experimental Design.} 
For each query $q$, we record the execution outcomes and runtimes of all $M=10$ workflows. Based on correctness alone, we compute the number of workflows that successfully execute $q$, denoted by $N(q) \in \{1, \ldots, 10\}$. This value serves as a proxy for problem difficulty: $N=1$ corresponds to extremely sparse cases where only a single workflow succeeds, while larger $N$ indicates easier queries solvable by many workflows.

We then group all queries according to their $N(q)$ values. For each difficulty level $N$, we compute:
\begin{itemize}
    \item the \emph{average maximum runtime}, corresponding to the case where the slowest correct workflow is always selected;
    \item the \emph{average minimum runtime}, corresponding to the oracle case where the fastest correct workflow is always selected.
\end{itemize}
These two quantities define the empirical upper and lower bounds on runtime achievable through workflow selection under correctness constraints.

\paragraph{Efficiency Upper Bound.}
Let $\mathcal{Q}_N$ denote the set of queries for which exactly $N$ workflows execute correctly. For each $q \in \mathcal{Q}_N$, let $t_i(q)$ be the runtime of workflow $i$ and $\mathcal{C}(q)$ be the set of workflows that execute correctly on $q$. We define:
\begin{equation}
    t_{\max}(q) = \max_{i \in \mathcal{C}(q)} t_i(q), \qquad
    t_{\min}(q) = \min_{i \in \mathcal{C}(q)} t_i(q).
\end{equation}
We then compute the empirical averages:
\begin{equation}
    \bar{t}_{\max}(N) = \frac{1}{|\mathcal{Q}_N|} \sum_{q \in \mathcal{Q}_N} t_{\max}(q), \qquad
    \bar{t}_{\min}(N) = \frac{1}{|\mathcal{Q}_N|} \sum_{q \in \mathcal{Q}_N} t_{\min}(q).
\end{equation}
The efficiency upper bound at difficulty level $N$ is then defined as the relative improvement:
\begin{equation}
    \Delta_{\text{eff}}(N) 
    = \frac{\bar{t}_{\max}(N) - \bar{t}_{\min}(N)}{\bar{t}_{\max}(N)}.
\end{equation}

\paragraph{Results and Analysis.}
Figure~\ref{fig:time_analysis} reports the empirical distributions of $|\mathcal{Q}_N|$ (right panel) as well as $\bar{t}_{\max}(N)$ and $\bar{t}_{\min}(N)$ (left panel) across all difficulty levels $N \in \{1,\ldots,10\}$. We observe that:

\begin{itemize}
    \item The majority of queries fall into low-to-moderate $N$ regimes, indicating that most queries are solvable by only a small subset of workflows.
    \item The gap between $\bar{t}_{\max}(N)$ and $\bar{t}_{\min}(N)$ is substantial for most $N$, yielding large efficiency upper bounds, often exceeding $70\%$.
    \item As $N$ decreases, i.e., queries become more difficult and sparse, the efficiency upper bound initially remains high but drops sharply when $N=1$, reflecting the fact that no selection is possible when only a single workflow succeeds.
\end{itemize}

These results empirically confirm that dynamic workflow selection admits a significant efficiency upper bound, especially in regimes where multiple workflows are correct but exhibit heterogeneous runtime behaviors. This observation directly supports \textbf{Key Insight Two} in Section~\ref{sec: upper_bound}, demonstrating that dynamic orchestration can substantially outperform any fixed static workflow in efficiency while preserving correctness.

\begin{figure}[H]
\centering
\includegraphics[width=0.95\textwidth]{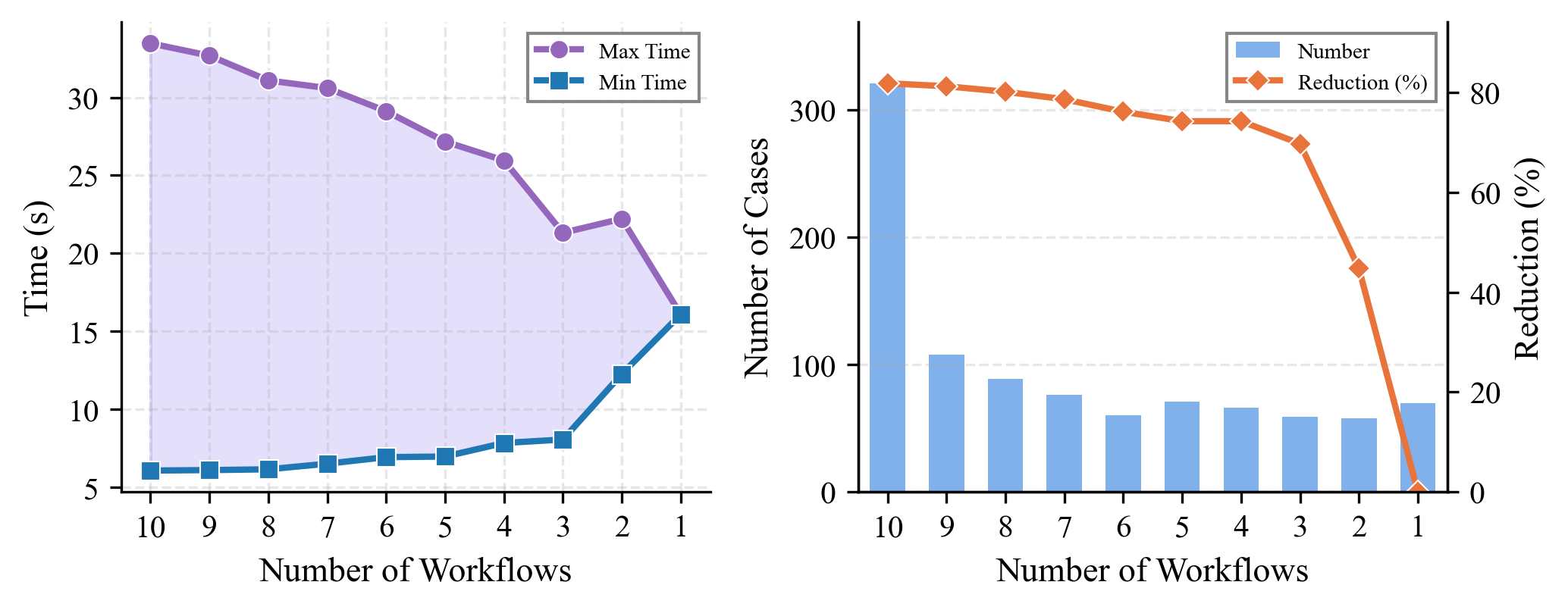}
\caption{\textbf{Performance metrics across task difficulty levels.} (Left) Max/min execution times versus task difficulty (measured by number of capable workflows). (Right) Case distribution and time reduction by difficulty. Easier tasks (left) show greater time variation and more samples, while harder tasks (right) demonstrate convergent execution times.}
\label{fig:time_analysis}
\end{figure}




\section{Supplementary Experimental Setup}\label{app:setup}
\subsection{Dataset}
\paragraph{SPIDER} is a cross-domain benchmark for complex Text-to-SQL parsing, consisting of 10,181 questions and 5,693 unique SQL queries over 200 multi-table databases spanning 138 domains. It is designed to evaluate cross-database generalization, with particular emphasis on compositional reasoning, multi-table joins, and nested queries.

\paragraph{BIRD} contains 12,751 question–SQL pairs across 95 large databases covering 37 professional domains. It introduces realistic challenges such as noisy data, value grounding, and external knowledge integration, substantially increasing the difficulty of schema linking and query generation compared to SPIDER.

\paragraph{SPIDER 2.0} comprises over 600 real-world Text-to-SQL tasks drawn from enterprise databases with thousands of columns and diverse SQL dialects. It extends beyond single-query parsing to multi-step workflows involving schema metadata, external documentation, and execution feedback, posing significantly greater challenges than both SPIDER and BIRD.

\paragraph{SYNSQL-2.5M} is a large-scale synthetic Text-to-SQL dataset containing 2,544,390 question–SQL pairs across 16,583 databases spanning diverse real-world domains. Databases are automatically synthesized from web tables using large language models, with SQL queries generated at varying complexity levels and natural language questions produced in multiple linguistic styles. Notably, SYNSQL-2.5M is the first million-scale Text-to-SQL dataset to provide chain-of-thought solutions for each sample, enhancing training quality and model interpretability.

\section{Case Study}\label{case_study}

To qualitatively analyze how RL influence workflow construction, we review representative examples from SynSQL before and after training. 

\begin{tcolorbox}[colback=blue!5!white,colframe=blue!55!black,width=1.0\textwidth,title={Reasoning Example of the Untrained Qwen2.5-7B-Instruct for Complex Task.}]{

\texttt{<think>} \\
1. \textbf{Template Selection}: The query involves selecting specific columns from the \texttt{experiments} table and filtering based on certain conditions. It also requires ordering and limiting the results. Given the complexity and the need for multiple steps, Template I seems most suitable as it allows for a detailed pipeline with parsing, generation, scaling, optimization, and selection.

2. \textbf{Actor Selection}:
\begin{itemize}
    \item \textbf{Parser}: We need to identify the relevant tables and columns. The \texttt{experiments} table is the primary focus, and we need to consider the \texttt{performance\_metrics} table for evaluation metric values.
    \item \textbf{Generator}: We need to generate the SQL query based on the parsed schema links.
    \item \textbf{Scaler}: We need to generate diverse SQL candidates to increase the probability of covering the gold SQL.
    \item \textbf{Optimizer}: We need to refine the generated SQL based on execution feedback.
    \item \textbf{Selector}: We need to select the optimal SQL statement from the candidates.
\end{itemize}

3. \textbf{Pipeline Composition}: 
\begin{itemize}
    \item \textbf{Parser}: \texttt{MACSQLCoTParser} is a good choice for its chain-of-thought schema parsing.
    \item \textbf{Generator}: \texttt{MACSQLGenerator} is suitable for its multi-agent collaboration system.
    \item \textbf{Scaler}: \texttt{MACSQLScaler} is appropriate for its multi-agent SQL generation strategy.
    \item \textbf{Optimizer}: \texttt{MACSQLOptimizer} is effective for its iterative refinement using execution feedback.
    \item \textbf{Selector}: \texttt{CHESSSelector} is a good choice for its execution-based selection.
\end{itemize}

The final pipeline will be:
\begin{itemize}
    \item Parse the schema to identify relevant tables and columns.
    \item Generate an initial SQL query.
    \item Scale the generated SQL to create diverse candidates.
    \item Optimize the candidates based on execution feedback.
    \item Select the best SQL statement.
\end{itemize}
\texttt{</think>} \\
\texttt{<answer>}\texttt{list[["MACSQLCoTParser", "MACSQLGenerator", "MACSQLScaler", "MACSQLOptimizer", "CHESSSelector"]]}\texttt{</answer>}
}
\end{tcolorbox}

\begin{tcolorbox}[colback=blue!5!white,colframe=blue!55!black,width=1.0\textwidth,title={Reasoning Example of the Trained Qwen2.5-7B-Instruct for Complex Task.}]{

\texttt{<think>} \\
Upon examining the natural language question, it is evident that the query requires filtering experiments based on specific criteria: classification type, completion status, and top-10 ranking according to evaluation metric values. The output must include multiple attributes such as experiment name, evaluation metric values, embedding dimensions, statuses, creators, and last modifiers. This indicates that the query involves complex filtering, ranking, and projection across multiple tables, particularly the \texttt{experiments} table which contains both the evaluation metrics and metadata.

The schema reveals that the \texttt{experiments} table is central to this query, containing columns like \texttt{experiment\_name}, \texttt{value} (likely the evaluation metric), \texttt{embedding\_dim}, \texttt{state}, \texttt{created\_by}, \texttt{last\_modified\_by}, and \texttt{last\_modified\_date}. Additionally, the \texttt{performance\_metrics} table may be necessary for retrieving specific evaluation metric values, though it is not directly referenced in the question. However, since the \texttt{value} column in \texttt{experiments} is mentioned and assumed to represent the evaluation metric, direct access to \texttt{performance\_metrics} is not strictly required.

Given the complexity of the query---specifically the need to rank experiments by evaluation metric value and select the top 10---it is clear that a single generator would struggle to produce the correct SQL without proper schema linking and candidate diversification. The presence of multiple relevant columns across different tables necessitates precise schema linking to avoid ambiguity, especially when dealing with foreign keys and relationships.

Now, considering the available templates, Template A ([\texttt{generator}]) would be insufficient because it lacks mechanisms for schema linking or candidate diversification, which are critical for handling complex queries with multiple join paths and ranking conditions. Similarly, templates like Template C, F, G, H, I, and J introduce unnecessary parallelism or multiple optimizers/selectors, which could lead to inefficiency or redundancy given the straightforward nature of the SQL generation task.

Template B ([\texttt{generator}, \texttt{optimizer}]) introduces an optimizer, which would allow refinement based on execution feedback. However, without an initial parser to identify relevant schema elements, the optimizer might operate on incomplete or incorrect schema links, leading to suboptimal corrections. Therefore, while an optimizer can improve robustness, it requires a reliable generator as its input.

Template E ([\texttt{parser}, \texttt{generator}]) includes a parser to extract relevant schema links before SQL generation. This is necessary because the query involves multiple tables and columns with indirect relationships (e.g., \texttt{experiments} referencing \texttt{performance\_metrics} through \texttt{value}). A parser like \texttt{RSLSQLBiDirParser} or \texttt{CHESSSelectorParser} would help identify that only the \texttt{experiments} table is needed for the core logic, reducing noise from irrelevant tables.

Given these considerations, the optimal configuration must include both a parser for schema linking and a generator capable of producing high-quality SQL with minimal errors. Among the available generators, \texttt{MACSQLGenerator} supports multi-agent collaboration and iterative refinement, making it robust even without an optimizer if used in conjunction with a strong parser.

Therefore, the solution must consist of a parser followed by a generator. The parser should be capable of adaptive schema linking, and the generator should handle complex filtering and ranking. The final pipeline must end with a generator, as the output must be a single SQL statement, not a set of candidates.

Based on this reasoning, the correct template is Template E: [\texttt{parser}, \texttt{generator}]. The parser must be a strong, adaptive schema linker, and the generator must be a multi-agent system that can handle complex logic and produce accurate SQL. Among the available actors, \texttt{LinkAlignParser} provides adaptive configuration and external knowledge integration, while \texttt{MACSQLGenerator} supports iterative refinement and schema pruning.

Thus, the final pipeline must be: [\texttt{LinkAlignParser}, \texttt{MACSQLGenerator}]. \\
\texttt{</think>} \\
\texttt{<answer>}\texttt{list['LinkAlignParser', 'MACSQLGenerator']}\texttt{</answer>}}
\end{tcolorbox}

\section{Prompt Template}\label{app:prompt}

\subsection{Dynamic Workflow Construction}
We use the prompt template as below to instruct the LLM to generate a structured workflow based on the currect natural language query, database schema and available actors.

\begin{tcolorbox}
[colback=blue!5!white,colframe=blue!55!black,width=1.0\textwidth,title={System Prompt Configuration for Dynamic Workflow Construction}]

{
\texttt{<|im\_start|>system} \\
You are a strategic SQL Planning Agent. Your task is to analyze natural language queries and design an optimal Actor pipeline that produces correct SQL statements.
\\
\textbf{\# Available Actors:} \\
Below is the candidate \texttt{Actor} Pool available for this round. Do not select any \texttt{Actors} outside this list.

\texttt{\{actors\}}

\textbf{\# Candidate Templates:} \\
Below are the candidate templates, which serve as slots to be filled with the selected \texttt{Actors}.
You are encouraged to select templates from the candidate set; however, you may also use alternatives if they better suit task needs.
\texttt{\{templates\}}

\textbf{\# Analysis Workflow}
\begin{enumerate}
    \item \textbf{Template Selection}:
    \begin{itemize}
        \item Analyze the natural language query and the database schema (complexity, table relationships, question type).
        \item Determine the most suitable template(s) from the candidate set that can structure the Actor pipeline effectively.
    \end{itemize}
    \item \textbf{Actor Selection}:
    \begin{itemize}
        \item Based on the selected template and the query characteristics, choose the Actors from the available pool that are best suited to handle each step of the task.
        \item Consider the specific roles and capabilities of each Actor relative to the query.
    \end{itemize}
    \item \textbf{Pipeline Composition}:
    \begin{itemize}
        \item Fill the selected template with the chosen Actors, arranging them sequentially or in parallel as required.
        \item Ensure that the final Actor is \texttt{pred\_sql} to produce the SQL output.
    \end{itemize}
\end{enumerate}

\textbf{\# Output Requirements}
\begin{enumerate}
    \item \textbf{Reasoning and Format}:
    \begin{itemize}
        \item First, reason step by step to determine the final Actor list.
        \item Provide your reasoning within \texttt{<think>...</think>}.
        \item Provide the final result strictly within \texttt{<answer>...</answer>}.
        \item The final answer must be a \textbf{Python list string}, enclosed exactly as \texttt{list[...]} inside \texttt{<answer>}.
    \end{itemize}
    \item \textbf{Actor Legality}:
    \begin{itemize}
        \item Only use Actors from the \texttt{Available Actors}; any unlisted Actor is invalid.
        \item The final pipeline must output \texttt{pred\_sql} as the last Actor.
    \end{itemize}
\end{enumerate}

\texttt{<|im\_end|>}
}
\end{tcolorbox}

\subsection{Pseudo Reward Evaluation.}
\begin{tcolorbox}[colback=blue!5!white,colframe=blue!55!black,width=1.0\textwidth,title={Prompt Template for LLM Pseudo Reward Evalution.}]
\begin{verbatim}
# Role: Expert SQL Pipeline Auditor
You are an expert system architect specializing in Text-to-SQL Actor 
pipelines. Your task is to evaluate a **Predicted Actor Sequence** against 
a **Baseline Actor Sequence** for a specific SQL generation task.

# Evaluation Criteria (Success Principles):
You must judge the sequences based on the following 6 principles:
CRITERION

# Input Prompt:
INPUT_PROMPT

# Baseline Actor Sequence:
BASELINE_ACTOR_SEQUENCE

# Predicted Actor Sequence:
PREDICTED_ACTOR_SEQUENCE

# Confidence Score Logic
- Assign a score from **0.0 to 1.0**.
- **Constraint**: If your judgment is `NOT_BETTER`, your confidence score 
  must be **>= 0.3**. If your confidence is lower than 0.3, it indicates 
  high uncertainty; in such cases, you must default the judgment to `BETTER` 
  (and adjust the score accordingly to reflect that new judgment).

# Output Format
Provide the final evaluation in a valid JSON object strictly following this 
structure:
{
  "reasoning": "A brief explanation focusing on why the predicted sequence 
                is better (or worse) based on principles and efficiency.",
  "judgment": "BETTER" or "NOT_BETTER",
  "confidence_score": float
}
\end{verbatim}
\end{tcolorbox}

\section{Principles}\label{principle}
\begin{tcolorbox}[colback=blue!5!white,colframe=blue!55!black,width=1.0\textwidth,title={Design Principles for Optimal Actor Pipelines.}]{
\textbf{1. The Principle of Search Space Decoupling} \\
\textbf{Description:} The pipeline must strictly separate identification (Parsing) from synthesis (Generation). \\
\textbf{Why it Improves Success:} Minimizes hallucinated columns by operating on a high-confidence subset. \\
\textbf{Guidance:} Execute \texttt{Parse} before \texttt{Generate} or \texttt{Scale}.

\textbf{2. The Principle of Methodological Consensus (Diversity)} \\
\textbf{Description:} Deploy a ``committee'' of parallel actors for complex or ambiguous tasks. \\
\textbf{Why it Improves Success:} Maximizes ``Recall'' by capturing what single-method models might miss. \\
\textbf{Guidance:} Use three or more diverse \texttt{Generate}/\texttt{Scale} actors for complex tasks.

\textbf{3. The Principle of Sequential Refinement} \\
\textbf{Description:} Optimization should be a cumulative process tackling specific error types in sequence. \\
\textbf{Why it Improves Success:} Fixes syntax first, then uses execution to verify nuanced domain logic. \\
\textbf{Guidance:} Use a sequence of multiple \texttt{Optimize} actors after generation.

\textbf{4. The Principle of Empirical Selection} \\
\textbf{Description:} Transition to a high-precision state using an execution-based filter. \\
\textbf{Why it Improves Success:} Ensures the system outputs the most ``provably correct'' candidate. \\
\textbf{Guidance:} Terminate with a \texttt{Select} actor (e.g., \texttt{FastExecSelector}).

\textbf{5. The Principle of Structural Elasticity} \\
\textbf{Description:} Pipeline depth and width must scale with the input complexity. \\
\textbf{Why it Improves Success:} Avoids over-engineering simple queries while ensuring deep analysis for complex ones. \\
\textbf{Guidance:} Ensure length is proportional to the task's Complexity Level.

\textbf{6. The Principle of Architectural Integrity} \\
\textbf{Description:} Every actor's output must perfectly satisfy the next actor's input. \\
\textbf{Why it Improves Success:} Prevents failure due to mismatched contexts or data types. \\
\textbf{Guidance:} Trace a continuous flow: \texttt{schema} $\to$ \texttt{schema-linkings} $\to$ \texttt{pred\_sql}.
}
\end{tcolorbox}



\end{document}